\documentclass{article}

\usepackage[preprint]{neurips_2026}

\usepackage[utf8]{inputenc} 
\usepackage[T1]{fontenc}    
\usepackage{hyperref}       
\usepackage{url}            
\usepackage{booktabs}       
\usepackage{amsfonts}       
\usepackage{nicefrac}       
\usepackage{microtype}      
\usepackage{xcolor}         

\usepackage{amsmath}  
\usepackage{amssymb}  
\usepackage{amsthm}
\usepackage{enumitem}
\newtheorem{theorem}{Theorem}
\newtheorem{proposition}{Proposition}

\usepackage{algorithmic}
\usepackage{graphicx}

\newcommand{\SSIRepoURL}{\url{https://github.com/AtheavenMa/Spectral-Spatial-Interpretation}}

\title{Spectral-Spatial Interpretation}

\author{%
  Haotian Ma\textsuperscript{1}\qquad
  Ruqi Yang\textsuperscript{1,2,*}\qquad
  Philip Townsend\textsuperscript{2} \\
  \textsuperscript{1}Department of Computer Sciences \\
  \textsuperscript{2}Department of Forest and Wildlife Ecology \\
  University of Wisconsin--Madison \\
  \textsuperscript{*}Corresponding author: \texttt{ryang275@wisc.edu} \\
}

\begin{document}

\maketitle

\begin{abstract}
Post-hoc image explanations usually show \emph{where} a model looks using
spatial heatmaps or indicate \emph{which channels} matter using global channel
scores. These separate views are limiting for multispectral data because channel
relevance is often spatially localized and physically meaningful. We introduce
\textbf{Spectral-Spatial Interpretation (SSI)}, a full-grid attribution method
over position-channel players \((g,c)\). For each spatial position, SSI fixes
the observed surrounding context, defines a conditional game over the channels
at that position, and applies the Shapley value to this local game. The result
is a full spectral-spatial attribution matrix rather than a single heatmap or a
global channel ranking. SSI has a clear local interpretation. It satisfies Shapley
fairness within each position, and aggregating its channel attributions exactly
recovers the position-level contextual contrast obtained by replacing that
position with the baseline. SSI also differs from context-averaged Owen-style
hierarchical accounting, which uses the same position-channel grouping but
averages over external spatial coalitions. On HLS Burn Scar, SSI outperforms an
MC-Owen estimator across retained attribution metrics while using only about
$6\%$ of its evaluation budget. Across CUB-200, EuroSAT, and HLS Burn Scar
under both per-channel mean and zero baselines, SSI achieves the best joint
deletion and insertion AUC over dense, flat, and rank-1 baselines, yields
the strongest spectral hit rates and channel deletion, and leads spatial
localization on both annotated datasets.
Code is available at \SSIRepoURL.
\end{abstract}

\section{Introduction}
\label{sec:introduction}
Modern vision models increasingly operate on inputs with multiple physically
meaningful channels, from RGB fine-grained recognition to Sentinel-2 land-cover
classification and burn-scar mapping. Existing post-hoc explanations usually
answer \emph{where} the model looks, but not \emph{which spectral channels}
support the prediction at each location. For these data, high attribution to near-infrared at a vegetation boundary or
to short-wave infrared within a burn scar reflects different physical evidence
from attribution to visible bands. An explanation should therefore identify
where the evidence lies, which channels support it, and how much each channel
contributes at each location.

Existing methods collapse this structure in one of two ways. Spatial
attribution methods such as Grad-CAM~\citep{selvaraju2017grad},
RISE~\citep{petsiuk2018rise}, and Extremal
Perturbation~\citep{fong2019understanding} aggregate over channels and produce
a single heatmap. Spectral or channel attribution methods rank bands globally
using mutual information, attention weights, or SHAP-style
scores~\citep{lundberg2017unified}, averaging channel effects across space.
Even a two-stage pipeline that multiplies a spatial heatmap by a global channel
vector assumes a rank-1 structure, \(A(g,c)=s(g)q(c)\), meaning that the same
channel profile applies everywhere. This assumption fails when different regions have different spectral
signatures, as with red and blue plumage on a bird or vegetation and water in a
satellite scene (Figure~\ref{fig:fig1}). Channel importance depends on location.
The same band can be decisive in one region and irrelevant in another, so the
attribution target should be a localized position-channel score rather than a
global channel ranking.

We introduce \textbf{Spectral-Spatial Interpretation (SSI)}, a full-grid attribution method that assigns
a score $\Phi(g,c)$ to each grid cell $g$ and channel $c$ (Figure~\ref{fig:fig2}). For each position, SSI fixes the rest of the image
to its observed values and applies the standard Shapley value only to the channels at that position.
The contribution lies in the observed-context position game rather than a new allocation rule. The output
can be used directly as a joint attribution matrix or aggregated into spatial scores
$\sum_c\Phi(g,c)$ identifying \emph{where} evidence lies and spectral scores $\sum_g\Phi(g,c)$
identifying \emph{which channels} contribute, without assuming a rank-1 spatial-spectral factorization.
This observed-context formulation gives SSI local Shapley fairness and contextual aggregation
consistency. In particular, $\sum_c \Phi(g,c)$ equals the full-context contrast of replacing the entire position $g$
with the baseline. Unlike context-averaged Owen-style hierarchical attribution~\citep{owen1977values},
which averages over external spatial coalitions, SSI targets localized spectral evidence and
solves small channel-player games at each position.

\begin{figure}[t]
    \centering
    \includegraphics[width=\linewidth]{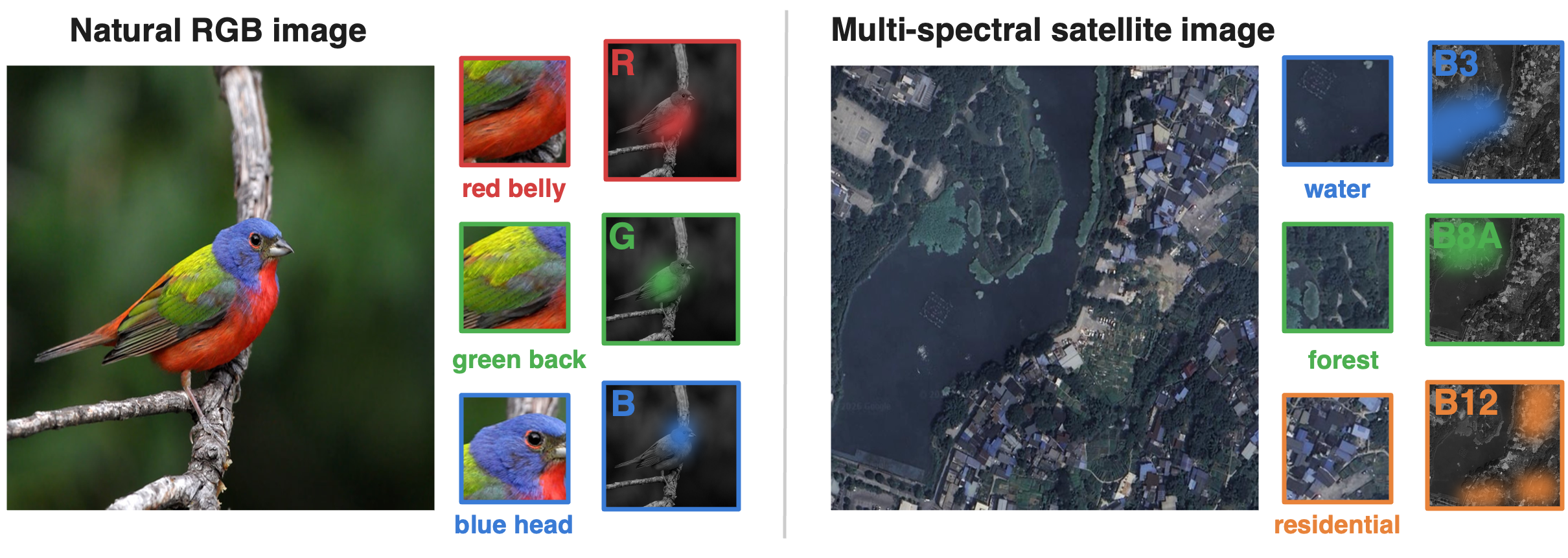}
    \caption{\textbf{Spectral channels carry spatially distinct physical information.} On a Painted Bunting (a), the red belly, green back, and blue head each align with a different RGB channel; on a Sentinel-2 scene (b), water, forest, and built-up areas align with B3, B8A, and B12. SSI is designed to recover this joint spectral-spatial structure.}
    \label{fig:fig1}
\end{figure}

We evaluate SSI on CUB-200~\citep{wah2011caltech},
EuroSAT~\citep{helber2019eurosat}, and HLS Burn
Scar~\citep{phillips2023hlsburnscar}. Across all three datasets, SSI
outperforms dense, flat, and rank-1 attribution baselines on joint
position-channel faithfulness metrics, and on HLS Burn Scar it further
outperforms an MC-Owen estimator while using only about \(6\%\) of its
evaluation budget.

Our contributions are as follows:
\begin{itemize}
\item We formulate spectral-spatial interpretation as attribution over position-channel players
$(g,c)$, targeting localized channel evidence rather than only a heatmap or a global channel ranking.
\item We define SSI as an observed-context attribution method that fixes the surrounding image context and applies standard Shapley allocation among channels at each position. We characterize its local Shapley fairness, contextual aggregation consistency, and distinction from context-averaged
Owen-style accounting.
\item Across CUB-200, EuroSAT, and HLS Burn Scar under both per-channel mean and zero baselines,
SSI achieves the best joint deletion, rank-1 deletion, and grid-sum insertion AUC over dense, flat,
and rank-1 baselines. It leads spectral hit rates and channel deletion on EuroSAT and HLS Burn Scar,
as well as spatial localization on both annotated datasets. On HLS Burn Scar, SSI also outperforms
MC-Owen using about 6\% of its evaluation budget.
\end{itemize}

\section{Related Work}

\paragraph{Post-hoc spatial attribution.}
A large family of explanation methods produces spatial heatmaps indicating which image regions drive a prediction. Gradient-based approaches such as Integrated Gradients~\citep{sundararajan2017axiomatic} and Grad-CAM~\citep{selvaraju2017grad} propagate target-class signals to pixels or activation maps. Perturbation-based approaches such as RISE~\citep{petsiuk2018rise} and Extremal Perturbation~\citep{fong2019understanding} measure prediction changes under spatial masks.

These methods are effective for identifying where evidence lies, but their explanations either aggregate over channels or treat input elements independently. They therefore do not provide a principled decomposition of which spectral channels matter at each location. 

\paragraph{Spectral and channel attribution.}
A complementary line of work ranks spectral bands or channels using mutual information~\citep{feng2014trivariate}, game-theoretic criteria~\citep{bruce2013game}, attention weights~\citep{hu2018squeeze, woo2018cbam}, or SHAP-style attribution~\citep{lundberg2017unified}. Such methods are useful for identifying globally important wavelengths, but they typically average channel relevance over space. This averaging can hide localized spectral evidence because a band may be unimportant on average yet decisive in a small region. SSI instead assigns attribution to each position-channel pair and preserves spatial variation in channel importance.

\paragraph{Joint spectral-spatial modeling.}
Architectures such as 3D CNNs~\citep{chen2016deep}, spectral-spatial
transformers~\citep{dang2022spectral}, and factorized architecture
search~\citep{zhong2022spectral} jointly exploit spectral and spatial
information but focus on prediction performance, and their internal
representations remain opaque.
Propagation-based explanations have been adapted for hyperspectral
classifiers~\citep{nagasubramanian2018explaining}, but remain architecture-specific.
SSI instead provides model-agnostic, game-theoretic attribution for
already-trained models.

\paragraph{Game-theoretic and hierarchical attribution.}
Shapley values~\citep{shapley1953value} and SHAP-style methods~\citep{lundberg2017unified} provide additive feature attributions but usually treat all interpretable units as flat players. This flat treatment does not reflect the natural position-channel hierarchy of multispectral images. The Shapley interaction index~\citep{grabisch1999axiomatic} and its higher-order extensions~\citep{sundararajan2020shapley, tsai2023faith} quantify synergies but do not exploit this two-level structure. Group-level Shapley methods~\citep{jullum2021groupshapley} treat each
predefined group as an atomic player and do not decompose contributions
within groups; methods addressing feature
dependence~\citep{aas2021explaining} estimate conditional expectations
rather than fixing observed context. SSI differs from both lines of work in its attribution units
and treatment of context.

\paragraph{Owen value.}
The Owen value~\citep{owen1977values}, building on games with coalition structures~\citep{aumann1974cooperative}, respects a priori coalitions by allocating value across and within groups. SSI uses the same position-channel grouping but targets observed-context conditional attribution rather than context-averaged hierarchical accounting.

\begin{figure}[t]
    \centering
    \includegraphics[width=\linewidth]{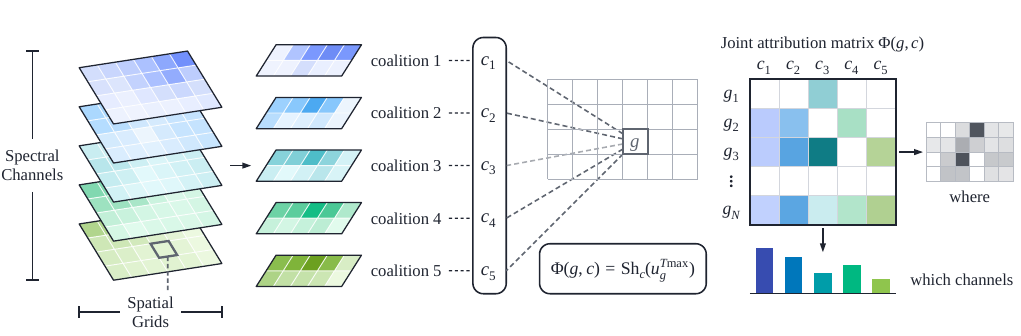}
    \caption{
    \textbf{Overview of SSI.}
    For each spatial position \(g\), SSI fixes the observed surrounding context and
    allocates the local contribution among channels, producing a full joint
    attribution matrix \(\Phi(g,c)\) for spatial and spectral interpretation.
    }
    \label{fig:fig2}
\end{figure}

\section{Method}
\label{sec:method}
SSI explains predictions using joint position-channel players $(g,c)$. It applies the standard Shapley value within each position, with the conditional position game providing the formulation for spectral-spatial explanation. We define the player space and the conditional game at each position, derive SSI as its Shapley value, and establish local fairness and contextual aggregation consistency. We also explain how SSI differs from globally efficient context-averaged accounting. Proofs and sampling details are in Appendix~\ref{app:method-details}.

\subsection{Spectral-Spatial Players}
\label{sec:formulation}

Let \(f\) be a trained model and let \(x\in\mathbb{R}^{C\times H\times W}\) be
an input with \(C\) channels. We partition the spatial domain into grid cells
indexed by \(\mathcal{G}\) and define the player set
\[
\mathcal{P}=\mathcal{G}\times\mathcal{C},
\qquad
(g,c)\in\mathcal{P}.
\]
For any subset \(S\subseteq\mathcal{P}\), let \(x_S\) denote the input with
players outside \(S\) replaced by a masking baseline. Let \(z_y(x_S)\) be the
target-class logit. We use the centered value function
\begin{equation}
v(S)=z_y(x_S)-z_y(x_\emptyset),
\qquad v(\emptyset)=0.
\label{eq:value}
\end{equation}
Adding a player may increase or decrease the target logit, so we do not
assume monotonicity.

The player set decomposes by spatial position:
\begin{equation}
\mathcal{P}=\bigsqcup_{g\in\mathcal{G}}S_g,
\qquad
S_g=\{(g,c):c\in\mathcal{C}\}.
\label{eq:decomp}
\end{equation}
The group \(S_g\) contains all channels at position \(g\). Channels within
\(S_g\) observe the same physical region through different spectral
measurements, whereas channels at different positions correspond to different
spatial regions.

For reference, we define two full-context contrasts. The player-level contrast is
\begin{equation}
\delta(g,c)
=
v(\mathcal{P})
-
v(\mathcal{P}\setminus\{(g,c)\}),
\label{eq:player-deletion}
\end{equation}
and the position-level contrast is
\begin{equation}
\Delta(g)
=
v(\mathcal{P})
-
v(\mathcal{P}\setminus S_g).
\label{eq:position-deletion}
\end{equation}
These compare the observed input to a baseline-replaced version while all
non-target players remain fixed.

\subsection{Conditional Position Games}
\label{sec:conditional-game}

SSI explains each position relative to the rest of the observed image. For a
target position \(g\), let
\[
B_g=\mathcal{P}\setminus S_g
\]
be all players outside that position. We define the conditional position game
\begin{equation}
w_g(R)
=
v(B_g\cup R)-v(B_g),
\qquad
R\subseteq S_g.
\label{eq:conditional-game}
\end{equation}
This game varies only the channels at position \(g\), while holding all other
positions fixed in their observed state. It satisfies
\[
w_g(\emptyset)=0,
\qquad
w_g(S_g)=v(\mathcal{P})-v(\mathcal{P}\setminus S_g)=\Delta(g).
\]
Thus, \(w_g\) is the local channel-allocation problem induced by the observed
surrounding context.

\subsection{Spectral-Spatial Interpretation}
\label{sec:ssi-value}

SSI assigns attribution to channel \(c\) at position \(g\) by applying the
Shapley value to the conditional game \(w_g\):
\begin{equation}
\Phi(g,c)
=
\mathrm{Sh}_c(w_g).
\label{eq:full-context}
\end{equation}
Since $|S_g|=C$, this is a standard Shapley value over $C$ players:
\begin{equation}
\Phi(g,c)
=
\sum_{R\subseteq S_g\setminus\{(g,c)\}}
\frac{|R|!(C-|R|-1)!}{C!}
\left[
w_g(R\cup\{(g,c)\})-w_g(R)
\right].
\label{eq:ssi-expanded}
\end{equation}
The output is a full attribution matrix
\[
\Phi\in\mathbb{R}^{C\times H\times W},
\]
defined for every grid cell and every channel.

SSI is one endpoint of a broader context-weighted family
$\alpha_\mu(g,c)=\sum_T \mu(T)\,\mathrm{Sh}_c(w_g^T)$
(Appendix~\ref{app:owen-relation}); SSI sets
$\mu(T)=\mathbf{1}\{T=\mathcal{G}\setminus\{g\}\}$, while the Owen value
averages over $T$ with Shapley weights. SSI does not average a channel's
contribution over all possible external spatial coalitions. Instead, it asks how
the channels at a position contribute in the surrounding image that was actually
observed.

\subsection{Properties}
\label{sec:ssi-properties}

The first property is a local characterization. For each fixed position \(g\),
SSI is the standard Shapley allocation of the conditional game \(w_g\).

\begin{theorem}[Local Shapley characterization]
\label{thm:local-ssi}
For each position \(g\), \(\Phi(g,\cdot)\) is the unique attribution over
channels in \(S_g\) satisfying local efficiency, within-position symmetry,
within-position null-player, and additivity for the conditional game \(w_g\).
\end{theorem}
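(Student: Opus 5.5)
The plan is to reduce the statement to Shapley's classical uniqueness theorem, applied to the finite player set \(S_g\) of size \(C\) and the game \(w_g\) defined in \eqref{eq:conditional-game}. First I will check that \(w_g\) is a legitimate TU game on \(S_g\): it is a real-valued set function on \(2^{S_g}\) with \(w_g(\emptyset)=v(B_g)-v(B_g)=0\). No monotonicity or superadditivity is needed, since Shapley's theorem holds on the full vector space of games. Next I will spell out the four axioms in this local setting:
\begin{itemize}
\item \emph{local efficiency}: \(\sum_c \psi_c(w_g)=w_g(S_g)=\Delta(g)\);
\item \emph{within-position symmetry}: if \(w_g(R\cup\{(g,c)\})=w_g(R\cup\{(g,c')\})\) for all \(R\) avoiding both channels, then \(\psi_c=\psi_{c'}\);
\item \emph{within-position null-player}: if \(w_g(R\cup\{(g,c)\})=w_g(R)\) for all \(R\), then \(\psi_c=0\);
\item \emph{additivity}: \(\psi(w+w')=\psi(w)+\psi(w')\) for games on \(S_g\).
\end{itemize}
Here an "attribution over channels" means a map \(\psi\) from games on \(S_g\) to \(\mathbb{R}^{S_g}\). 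Additivity only makes sense for such a map, so the uniqueness claim concerns the operator evaluated at \(w_g\), not a single vector.

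For existence, I will verify directly from the expansion \eqref{eq:ssi-expanded} that \(\mathrm{Sh}\) satisfies the axioms.
\begin{itemize}
\item Additivity and null-player are immediate, because the formula is linear in the marginal contributions and these vanish for a null player.
\item Symmetry follows from the bijection \(R\mapsto R\triangle\{(g,c),(g,c')\}\) on coalitions that contain exactly one of the two channels.
\item Efficiency follows from the permutation form: \(\Phi(g,c)\) is the average, over orderings \(\pi\) of \(S_g\), of the marginal contribution of \((g,c)\). For each fixed \(\pi\) these contributions telescope to \(w_g(S_g)-w_g(\emptyset)=\Delta(g)\).
\end{itemize}

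For uniqueness, I will use the unanimity-basis argument. For nonempty \(T\subseteq S_g\), the games \(u_T(R)=\mathbf 1\{T\subseteq R\}\) form a basis of the space of games on \(S_g\) vanishing at \(\emptyset\). The Möbius coefficients give this decomposition explicitly:
\[
w_g=\sum_{T\neq\emptyset}\lambda_T u_T,
\qquad
\lambda_T=\sum_{R\subseteq T}(-1)^{|T|-|R|}w_g(R).
\]
For a scaled game \(\lambda u_T\), players outside \(T\) are null, so they receive 0. Players inside \(T\) are pairwise symmetric, so they receive equal shares. Efficiency then forces each share to be \(\lambda/|T|\).

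The main obstacle is that the \(\lambda_T\) may be negative, so additivity alone does not directly handle \(w_g\). I will treat this in the standard way. Write \(w_g+\sum_{\lambda_T<0}|\lambda_T|u_T=\sum_{\lambda_T>0}\lambda_T u_T\) and apply additivity to both sides. Every term is now a game with a nonnegative coefficient, whose value is already determined, so the value of \(w_g\) is determined as well. Since \(\mathrm{Sh}\) satisfies the axioms, this forced value must equal \(\mathrm{Sh}(w_g)=\Phi(g,\cdot)\).

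Finally, I will note that the argument runs separately for each \(g\), because \(w_g\) depends only on the fixed observed context \(B_g\). This gives the characterization position by position, as the statement requires.
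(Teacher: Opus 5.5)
Your proposal is correct and takes the same route as the paper. The paper simply notes that $w_g$ is an ordinary cooperative game on the $C$-player set $S_g$ and invokes Shapley's classical uniqueness theorem, while you additionally re-derive that theorem (existence via the permutation form, uniqueness via the unanimity basis); your negative-coefficient detour is valid but unnecessary, since efficiency, symmetry, and null-player already fix the value of $\lambda u_T$ for every real $\lambda$, so additivity applies directly to $w_g=\sum_{T}\lambda_T u_T$.
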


Theorem~\ref{thm:local-ssi} applies the standard Shapley characterization to
the local game \(w_g\). SSI therefore allocates attribution within each spatial
position after conditioning on the observed context.

The second property links the channel-level scores to a spatial summary.

\begin{theorem}[Contextual aggregation consistency]
\label{thm:positive}
For every position \(g\),
\begin{equation}
\sum_{c\in\mathcal{C}}\Phi(g,c)
=
\Delta(g).
\label{eq:position-consistency}
\end{equation}
\end{theorem}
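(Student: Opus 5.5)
The plan is to reduce the claim to the efficiency property of the Shapley value for the conditional game \(w_g\), and then to evaluate \(w_g\) at the grand coalition. Fix a position \(g\). By definition~\eqref{eq:full-context}, \(\Phi(g,c)=\mathrm{Sh}_c(w_g)\), where \(w_g\) is a game on the \(C\) players of \(S_g\) with \(w_g(\emptyset)=0\). By Theorem~\ref{thm:local-ssi}, or directly from the standard Shapley characterization, \(\Phi(g,\cdot)\) satisfies local efficiency:
\[
\sum_{c\in\mathcal{C}}\mathrm{Sh}_c(w_g)=w_g(S_g)-w_g(\emptyset).
\]

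If one prefers not to invoke Theorem~\ref{thm:local-ssi}, efficiency can be verified from the permutation form of~\eqref{eq:ssi-expanded}. Write \(\Phi(g,c)\) as the average over orderings \(\pi\) of \(S_g\) of the marginal contribution of \((g,c)\) to its predecessors in \(\pi\). For a fixed \(\pi\), the marginal contributions summed over all \(c\) telescope to \(w_g(S_g)-w_g(\emptyset)\). Averaging over the \(C!\) orderings preserves this identity. The only point needing care is matching the weights \(|R|!(C-|R|-1)!/C!\) to the count of orderings in which exactly \(R\) precedes \((g,c)\), which is the standard counting argument.

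It remains to evaluate the endpoints of \(w_g\) using~\eqref{eq:conditional-game}. First, \(w_g(\emptyset)=v(B_g)-v(B_g)=0\). Second, since \(B_g\cup S_g=\mathcal{P}\),
\[
w_g(S_g)=v(\mathcal{P})-v(\mathcal{P}\setminus S_g).
\]
This is exactly \(\Delta(g)\) by~\eqref{eq:position-deletion}. Substituting these values into the efficiency identity gives~\eqref{eq:position-consistency}.

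There is no genuine obstacle here. The only step requiring any attention is the bookkeeping that identifies \(B_g\cup S_g\) with \(\mathcal{P}\), which follows from the disjoint decomposition~\eqref{eq:decomp}. The constant \(z_y(x_\emptyset)\) in \(v\) cancels in every difference, so the centering plays no role in the argument.
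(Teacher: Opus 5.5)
Your proof is correct and follows the paper's own argument: local Shapley efficiency on \(w_g\) gives \(\sum_c\Phi(g,c)=w_g(S_g)-w_g(\emptyset)\), and then \(w_g(\emptyset)=0\), \(B_g\cup S_g=\mathcal{P}\), and \(B_g=\mathcal{P}\setminus S_g\) turn this into \(\Delta(g)\). Your telescoping check of efficiency via the permutation form is a harmless self-contained addition; the paper simply takes efficiency from the standard Shapley characterization.
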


Thus, the channel-aggregated SSI map exactly equals the full-context
contrast of replacing the entire position by the baseline. This identity follows from local Shapley efficiency and requires no
monotonicity, supermodularity, or architectural assumption.

This local aggregation property is different from global value accounting. In
interacting games, the full-context contrasts of different positions need not
sum to the total value.

\begin{proposition}[Contextual aggregation and global efficiency can conflict]
\label{prop:global-efficiency-conflict}
For the class of all value functions \(v\), no attribution rule can guarantee
both
\[
\sum_{c\in\mathcal{C}}\alpha(g,c)=\Delta(g)
\quad\text{for every }g
\]
and
\[
\sum_{g\in\mathcal{G}}\sum_{c\in\mathcal{C}}\alpha(g,c)=v(\mathcal{P})
\]
for every game.
\end{proposition}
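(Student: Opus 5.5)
The plan is to argue by counterexample: exhibit a single game \(v\) in which the two requirements force incompatible values for the total attribution. Summing the first condition over all positions gives \(\sum_{g}\sum_{c}\alpha(g,c)=\sum_{g}\Delta(g)\). The second condition requires the same double sum to equal \(v(\mathcal{P})\). Any rule satisfying both for every game would therefore imply \(\sum_g\Delta(g)=v(\mathcal{P})\) for every game. It suffices to exhibit one admissible value function for which this identity fails. The attribution rule \(\alpha\) plays no role beyond the two linear constraints, so the argument is uniform over all rules.

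For the construction I take the smallest nontrivial setting: two positions \(\mathcal{G}=\{g_1,g_2\}\), with any number of channels \(C\ge 1\). Define a pure spatial interaction game:
\[
v(S)=1 \text{ if } S\cap S_{g_1}\neq\emptyset \text{ and } S\cap S_{g_2}\neq\emptyset, \qquad v(S)=0 \text{ otherwise}.
\]
Then \(v(\emptyset)=0\) and \(v(\mathcal{P})=1\). Removing either whole position leaves a coalition confined to one position, so \(v(\mathcal{P}\setminus S_{g_1})=v(\mathcal{P}\setminus S_{g_2})=0\). Hence \(\Delta(g_1)=\Delta(g_2)=1\) and \(\sum_g\Delta(g)=2\neq 1=v(\mathcal{P})\). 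This contradicts the implied identity, so no rule can guarantee both properties on this game.

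The only point needing care, which I expect to be the main (mild) obstacle, is that "the class of all value functions" must include such a game. The paper defines \(v\) through a model \(f\) via \(v(S)=z_y(x_S)-z_y(x_\emptyset)\), so I would either read the statement at the level of abstract games with \(v(\emptyset)=0\), as the phrase "for every game" suggests, or note that the game is realizable. For realizability, take a model whose target logit equals the product of indicators that some channel at \(g_1\) and some channel at \(g_2\) differ from the baseline, with the input chosen to differ from the baseline everywhere; then \(z_y(x_\emptyset)=0\). If desired, I would add that the gap \(\sum_g\Delta(g)-v(\mathcal{P})\) equals the interaction term, which is nonzero here, and that substitution (e.g. \(v(S)=1\) iff either position present) gives a gap of the opposite sign. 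This shows the conflict is not a sign artifact.
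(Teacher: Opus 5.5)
Your proof is correct and follows the same route as the paper: sum the contextual-aggregation identity over positions to force $\sum_g \Delta(g) = v(\mathcal{P})$ for every game, then refute this with a two-position complementarity game in which each $\Delta(g)=1$ while $v(\mathcal{P})=1$. The paper's counterexample uses one channel per position ($v(\{a,b\})=1$, all other coalitions zero), which is exactly your game at $C=1$. Your extension to arbitrary $C$, the realizability remark, and the opposite-sign substitution example are harmless strengthenings rather than a different argument.
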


SSI provides position-wise conditional explanations with exact contextual
aggregation. It does not aim to allocate the entire prediction value globally
across all positions.

\subsection{Computation, Outputs, and Relation to Owen-Style Accounting}
\label{sec:practical}

In the main experiments, SSI evaluates \(\Phi(g,c)\) for every position
\(g\in\mathcal{G}\) and channel \(c\in\mathcal{C}\) on the full grid. No subset
selection is used, and \(\Phi(g,c)\) is reported for every position-channel pair.

We use permutation sampling over the channels within each
position \citep{castro2009polynomial}. In all experiments, SSI uses
\[
K=\min(2^C,10)
\]
permutations per grid cell, or enumerates all $2^C$ subsets when
$2^C\leq 10$. The sampling estimator is unbiased and concentrates as a
standard bounded Monte Carlo estimator; details are given in
Appendix~\ref{app:ssi-sampling}, and the corresponding total forward-pass
counts per image are reported in Appendix~\ref{app:computational_cost}.
For $C=3$ (CUB-200), this enumerates all channel subsets exactly; for
$C{=}13$ (EuroSAT) and $C{=}6$ (HLS), the empirical concentration bounds
remain below $10^{-2}$ (Appendix~\ref{app:empirical-concentration}).

The output of SSI can be evaluated directly as the joint attribution matrix $\Phi(g,c)$ or summarized as spatial scores $\sum_c \Phi(g,c)$ and spectral scores $\sum_g \Phi(g,c)$.

SSI uses the same position-channel grouping as Owen-style hierarchical values,
but estimates a different quantity. The Owen value provides globally efficient,
context-averaged accounting by averaging over external spatial coalitions before
allocating value within a position. SSI instead fixes the observed external
context and estimates the conditional game \(w_g\). In estimation, SSI samples
only a small \(C\)-player conditional game at each position, whereas MC-Owen
samples block-respecting permutations over the full position-channel set and
includes additional variance from external spatial contexts. Appendix~\ref{app:owen-relation} makes this relation
explicit, and \S\ref{sec:exp_mc_owen_hls} reports a focused empirical
comparison.

\section{Experiments}
\label{sec:experiments}

We evaluate the faithfulness of SSI's joint position-channel attribution $\Phi(g,c)$ using deletion and insertion AUC. We then examine how rank-1 spatial-spectral factorization affects this attribution and assess spatial and spectral aggregation through localization, channel deletion, and spectral hit rates. Qualitative case studies are presented in \S\ref{sec:exp_qualitative}, with full metric curves in the appendix. We also compare SSI with stronger gradient-based baselines on HLS Burn Scar and evaluate faithfulness using ROAD.

\subsection{Experimental Setup}
\label{sec:experimental-setup}

\paragraph{Datasets and models.}
We evaluate SSI on three datasets covering RGB images, multispectral land-cover classification, and burn-scar mapping. \textbf{CUB-200-2011}~\citep{wah2011caltech} contains RGB images of 200 bird species with bounding-box annotations. We fine-tune an ImageNet-pretrained ResNet-18 classifier. \textbf{EuroSAT}~\citep{helber2019eurosat} contains 13-band Sentinel-2 patches from 10 land-use/land-cover classes. We train a ResNet-18 classifier from scratch on all bands. \textbf{HLS Burn Scar}~\citep{phillips2023hlsburnscar} contains 6-band
Harmonized Landsat--Sentinel-2 patches with burn-scar masks. We train a
lightweight UNet for burn-scar segmentation and use the burn-class logit
averaged over ground-truth mask positions as the scalar response for
attribution. Training details and model performance are reported in Appendix~\ref{app:exp_details}.

\paragraph{Player space and baselines.}
All methods are evaluated on the same full position-channel player space $(g,c)$. To balance spatial resolution and computational cost, we use an
$8\times8$ grid on EuroSAT and HLS Burn Scar and a finer $16\times16$
grid on CUB-200 to capture more detailed spatial structure. We use a per-channel mean baseline as the primary masking value in all
datasets and methods; to verify robustness to baseline choice, we also
report results with a zero baseline. The semantics of the conditional
game $w_g$ depend on this choice, as is inherent to all
perturbation-based attribution. We compare SSI against three classes of attribution methods. The dense and flat baselines include Integrated Gradients (IG)~\citep{sundararajan2017axiomatic}, DeepSHAP~\citep{lundberg2017unified,shrikumar2017learning}, and KernelSHAP~\citep{lundberg2017unified}. Dense maps from IG and DeepSHAP are summed within each position-channel player; KernelSHAP directly treats each $(g,c)$ as an interpretable feature. We also include a rank-1 perturbation control based on RISE~\citep{petsiuk2018rise}. This method forms a factorized attribution map \(A(g,c)=s(g)q(c)\), where \(s(g)\) is a spatial importance score and \(q(c)\) is a global channel score obtained from the target-logit drop caused by masking channel \(c\) globally across all spatial positions. It tests whether a spatial heatmap multiplied by a global channel vector is sufficient for joint spectral-spatial explanation. Finally, SSI computes the full-context attribution $\Phi(g,c)$ for every grid cell and channel. On HLS Burn Scar, we also compare against Expected Gradients~\citep{erion2021expected}, Guided IG~\citep{kapishnikov2021guided}, and FullGrad~\citep{srinivas2019fullgradient}; these additional comparisons are reported separately in \S\ref{sec:exp_gradient_baselines}. Methods with dense channel-resolved attribution maps are aggregated spatially within grid cells while retaining the channel dimension for joint evaluation; channel summation is performed only for spatial summaries. Baseline descriptions are in Appendix~\ref{app:baseline-details}, and evaluation budgets for the original methods are reported in Appendix~\ref{app:computational_cost}.

\paragraph{Metrics.}
All evaluations use the target-class logit, matching the value function in
\S\ref{sec:formulation}. Lower deletion AUC indicates better attribution, whereas higher insertion AUC,
bbox mIoU, and spectral hit rates indicate better results. Joint deletion operates directly on ranked position-channel players. Rank-1 projection evaluates each method under the best rank-1 factorization of its attribution matrix into a spatial heatmap and a global channel profile. For insertion, we sum attribution scores over channels at each grid position and insert positions in decreasing order of the aggregated score, which evaluates the spatial summary and ensures comparability with rank-1 baselines like RISE. Spatial aggregation uses
$A_{\mathrm{sp}}(g)=\sum_c A(g,c)$, while spectral aggregation uses
$A_{\mathrm{ch}}(c)=\sum_g A(g,c)$.

For spatial localization, we threshold the channel-aggregated map at fixed
top-coverage levels and compute mIoU against available annotations. 

For spectral hit rates, we compare the top-ranked channels against
dataset-specific physical priors grounded in established spectral indices (e.g.,
Normalized Burn Ratio for burn scars, NDVI/NDRE for vegetation). Only bands
with well-documented physical correlations are included; classes without a
reliably positive spectral signature are excluded. The full prior specification
and scientific justification are given in
Appendix~\ref{app:spectral_priors}.

\subsection{Joint Faithfulness}
\label{sec:exp_joint}

SSI assigns attribution to full-grid position-channel players $(g,c)$. We therefore evaluate faithfulness by ranking the same players and measuring how efficiently the ranking removes target-class evidence. Our primary metric is joint deletion AUC: players are masked in decreasing attribution order, and the target-class logit is recorded after each step. Lower AUC indicates that the attribution ranking removes full-context evidence earlier.

We also report a rank-1 deletion diagnostic. For each method, we reshape the attribution scores into a matrix \(A\in\mathbb{R}^{C\times N}\), where \(N\) is the number of spatial cells, and retain only its leading rank-1 approximation. This factorized map has the form \(A(g,c)\approx a(g)b(c)\), corresponding to a spatial heatmap multiplied by a global channel profile. If such a factorization were sufficient, rank-1 deletion would preserve joint faithfulness.

Table~\ref{tab:joint_deletion} shows that SSI achieves the lowest joint
deletion AUC and the highest insertion AUC on all three datasets under
both the per-channel mean and zero baselines. On CUB-200 under the mean
baseline, SSI improves joint deletion over IG and DeepSHAP and achieves
the highest insertion AUC (.7132), surpassing RISE (.6886) despite
RISE using $8{,}000$ spatial masks versus $2048$ for SSI. The differences are larger on EuroSAT and HLS Burn Scar, where SSI removes
target evidence substantially earlier than all baselines under both deletion metrics. These results
support the use of $\Phi(g,c)$ as a full-context position-channel
attribution score and suggest that the advantage is not driven by a
particular baseline choice. SSI also performs consistently across the tested grid resolutions, as shown
in Appendix~\ref{app:grid_resolution}.

We also evaluate faithfulness using ROAD~\citep{rong2022road} on HLS Burn Scar. SSI achieves the highest LeRF AUC (.657) and lowest MoRF AUC (.1143) among nine evaluated methods under this complementary removal protocol. Appendix~\ref{app:road} reports the full comparison.

The rank-1 diagnostic further shows that factorized spectral-spatial
explanations are insufficient. RISE can provide strong spatial or
global-channel signals, but its rank-1 deletion AUC remains
substantially worse than SSI. The gap between joint and rank-1
deletion is smallest on CUB-200 ($C{=}3$) and largest on EuroSAT
($C{=}13$), suggesting that the rank-1 approximation degrades as
channel diversity increases. Thus, a spatial heatmap multiplied by a
global channel vector cannot replace the full joint attribution matrix.

\begin{table}[t]
\centering
\caption{\textbf{Joint faithfulness.} Deletion AUC (lower is better) and insertion AUC (higher is better). Values are mean $\pm$ bootstrap std. ``Joint'' and ``Rank-1'' evaluate position-channel deletion; ``Ins.'' evaluates grid-sum insertion. Best scores per column are in \textbf{bold}. Upper block: per-channel mean baseline; lower block: zero baseline.}
\label{tab:joint_deletion}
\small
\setlength{\tabcolsep}{2.8pt}
\resizebox{\linewidth}{!}{%
\begin{tabular}{@{}l ccc ccc ccc@{}}
\toprule
& \multicolumn{3}{c}{CUB-200}
& \multicolumn{3}{c}{EuroSAT}
& \multicolumn{3}{c}{HLS Burn Scar} \\
\cmidrule(lr){2-4} \cmidrule(lr){5-7} \cmidrule(lr){8-10}
Method
& Joint$\downarrow$ & Rk1$\downarrow$ & Ins.$\uparrow$
& Joint$\downarrow$ & Rk1$\downarrow$ & Ins.$\uparrow$
& Joint$\downarrow$ & Rk1$\downarrow$ & Ins.$\uparrow$ \\
\midrule
\multicolumn{10}{@{}l}{\textit{Per-channel mean baseline}} \\
\midrule
SSI
& \textbf{.0088{\scriptsize$\pm$.001}}
& \textbf{.0104{\scriptsize$\pm$.001}}
& \textbf{.7132{\scriptsize$\pm$.019}}
& \textbf{.1397{\scriptsize$\pm$.020}}
& \textbf{.1682{\scriptsize$\pm$.021}}
& \textbf{.7319{\scriptsize$\pm$.023}}
& \textbf{.1545{\scriptsize$\pm$.008}}
& \textbf{.1657{\scriptsize$\pm$.010}}
& \textbf{.6638{\scriptsize$\pm$.008}} \\
IG
& .0112{\scriptsize$\pm$.001}
& .0126{\scriptsize$\pm$.001}
& .2895{\scriptsize$\pm$.022}
& .2097{\scriptsize$\pm$.030}
& .2267{\scriptsize$\pm$.030}
& .6621{\scriptsize$\pm$.029}
& .2160{\scriptsize$\pm$.011}
& .2723{\scriptsize$\pm$.015}
& .6291{\scriptsize$\pm$.009} \\
DeepSHAP
& .0112{\scriptsize$\pm$.001}
& .0127{\scriptsize$\pm$.001}
& .2853{\scriptsize$\pm$.022}
& .2078{\scriptsize$\pm$.029}
& .2255{\scriptsize$\pm$.031}
& .6614{\scriptsize$\pm$.028}
& .2149{\scriptsize$\pm$.011}
& .2704{\scriptsize$\pm$.016}
& .6321{\scriptsize$\pm$.009} \\
KernelSHAP
& .0178{\scriptsize$\pm$.001}
& .0222{\scriptsize$\pm$.002}
& .3481{\scriptsize$\pm$.019}
& .2434{\scriptsize$\pm$.027}
& .2824{\scriptsize$\pm$.028}
& .5767{\scriptsize$\pm$.030}
& .2096{\scriptsize$\pm$.012}
& .2558{\scriptsize$\pm$.016}
& .5500{\scriptsize$\pm$.008} \\
RISE
& .0183{\scriptsize$\pm$.001}
& .0183{\scriptsize$\pm$.001}
& .6886{\scriptsize$\pm$.017}
& .2130{\scriptsize$\pm$.026}
& .1969{\scriptsize$\pm$.023}
& .7091{\scriptsize$\pm$.023}
& .2888{\scriptsize$\pm$.016}
& .2871{\scriptsize$\pm$.015}
& .6130{\scriptsize$\pm$.009} \\
\midrule
\multicolumn{10}{@{}l}{\textit{Zero baseline}} \\
\midrule
SSI
& \textbf{.0063{\scriptsize$\pm$.000}}
& \textbf{.0072{\scriptsize$\pm$.001}}
& \textbf{.5224{\scriptsize$\pm$.017}}
& \textbf{.0763{\scriptsize$\pm$.015}}
& \textbf{.0850{\scriptsize$\pm$.017}}
& \textbf{.6001{\scriptsize$\pm$.024}}
& \textbf{.1203{\scriptsize$\pm$.009}}
& \textbf{.1175{\scriptsize$\pm$.010}}
& \textbf{.6564{\scriptsize$\pm$.008}} \\
IG
& .0083{\scriptsize$\pm$.001}
& .0085{\scriptsize$\pm$.001}
& .0832{\scriptsize$\pm$.013}
& .1434{\scriptsize$\pm$.023}
& .1874{\scriptsize$\pm$.029}
& .4983{\scriptsize$\pm$.033}
& .1785{\scriptsize$\pm$.010}
& .1802{\scriptsize$\pm$.011}
& .6140{\scriptsize$\pm$.009} \\
DeepSHAP
& .0083{\scriptsize$\pm$.000}
& .0084{\scriptsize$\pm$.001}
& .0802{\scriptsize$\pm$.012}
& .1444{\scriptsize$\pm$.024}
& .1876{\scriptsize$\pm$.028}
& .4963{\scriptsize$\pm$.032}
& .1776{\scriptsize$\pm$.010}
& .1802{\scriptsize$\pm$.011}
& .6174{\scriptsize$\pm$.009} \\
KernelSHAP
& .0154{\scriptsize$\pm$.001}
& .0327{\scriptsize$\pm$.003}
& .1523{\scriptsize$\pm$.011}
& .1521{\scriptsize$\pm$.017}
& .1862{\scriptsize$\pm$.020}
& .4257{\scriptsize$\pm$.031}
& .1555{\scriptsize$\pm$.009}
& .1637{\scriptsize$\pm$.009}
& .5503{\scriptsize$\pm$.008} \\
RISE
& .0142{\scriptsize$\pm$.001}
& .0144{\scriptsize$\pm$.001}
& .4246{\scriptsize$\pm$.021}
& .1599{\scriptsize$\pm$.019}
& .1741{\scriptsize$\pm$.023}
& .5644{\scriptsize$\pm$.029}
& .1971{\scriptsize$\pm$.010}
& .2229{\scriptsize$\pm$.010}
& .6116{\scriptsize$\pm$.008} \\
\bottomrule
\end{tabular}
}

\end{table}

\subsection{Spatial and Spectral Aggregation}
\label{sec:exp_aggregation}

We assess whether the full position-channel attribution matrix remains
meaningful after aggregation. Spatial aggregation
$A_{\mathrm{sp}}(g)=\sum_c A(g,c)$ evaluates whether the resulting heatmap
localizes annotated regions; spectral aggregation
$A_{\mathrm{ch}}(c)=\sum_g A(g,c)$ evaluates whether the resulting channel
ranking identifies decision-relevant bands. For SSI, Theorem~\ref{thm:positive} gives this spatial aggregation a direct
contextual interpretation. The channel sum $\sum_c\Phi(g,c)$ equals the
full-context contrast of replacing the entire position by the baseline.

Figure~\ref{fig:aggregation_six_panel} summarizes all three aggregation
views. Panels~(a) and~(b) show spatial localization after channel
aggregation. On CUB-200, SSI achieves the strongest mIoU across all
coverage levels; on HLS Burn Scar the margin is larger, with SSI
reaching .389 mIoU at Top-10\% compared to .277 for DeepSHAP and .191
for RISE. KernelSHAP, the flat-Shapley baseline operating on the same
player granularity, lags substantially on both datasets. This is
consistent with Proposition~\ref{prop:flat-shapley}, which shows that
flat Shapley does not guarantee position-level aggregability, whereas
SSI's spatial sum equals the contextual contrast $\Delta(g)$ by
Theorem~\ref{thm:positive}.
Panels~(d) and~(e) show spectral hit rates on EuroSAT and HLS Burn
Scar. Across all reported $K$, SSI achieves the highest Hit@K on both
datasets, indicating that its aggregated channel ranking recovers
domain-derived spectral bands more reliably than the compared methods.
Panels~(c) and~(f) show channel deletion AUC on EuroSAT and HLS Burn
Scar. SSI achieves the lowest channel deletion AUC on both datasets,
confirming that its spectral summaries remove decision-relevant spectral
evidence earlier than the alternatives. The spatial and spectral summaries of the full attribution matrix therefore
remain informative in both domains.

\begin{figure*}[t]
    \centering
    \includegraphics[width=\textwidth]{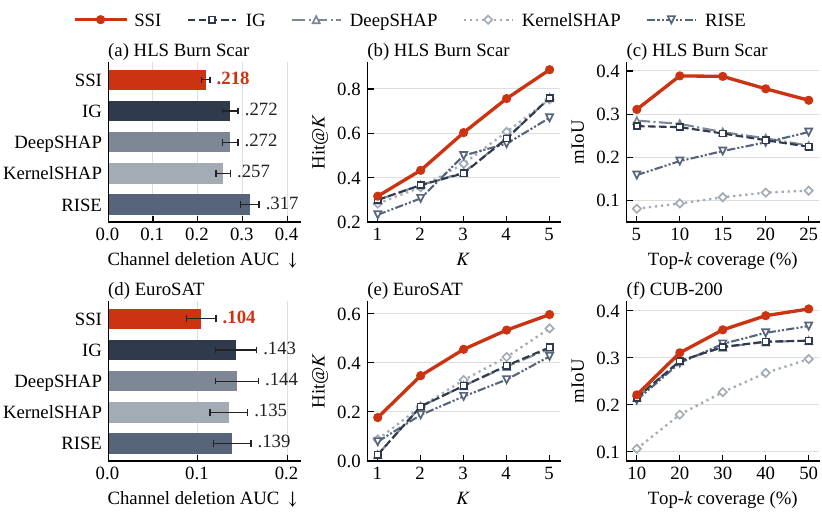}
    \caption{
    \textbf{Spatial and spectral aggregation views.}
    Panels~(c) and~(f) show spatial localization using mIoU at different
    coverage thresholds. Panels~(b) and~(e) report spectral Hit@K, while
    panels~(a) and~(d) show channel deletion AUC (lower is better).
    }
    \label{fig:aggregation_six_panel}
\end{figure*}

\subsection{Qualitative Case Studies}
\label{sec:exp_qualitative}

Figure~\ref{fig:fig4} visualizes representative localized channel evidence from channel-resolved methods. These cases illustrate attribution patterns rather than provide faithfulness evidence. Rank-1 controls such as RISE are omitted because they do not provide independent per-channel maps.

\textbf{Localized RGB evidence on CUB-200.}\quad
Figure~\ref{fig:fig4} shows a representative CUB-200 example. The
channel-aggregated SSI map concentrates on the bird region, while the
per-channel maps show different spatial supports across RGB channels. Regions with distinct plumage colors tend to receive relatively higher
attribution in their dominant color channel, reflecting the model's
learned spectral sensitivity of RGB bands.

\textbf{Localized physical bands on HLS Burn Scar.}\quad
SSI places high attribution near the burn-scar region and assigns substantial
contribution to physically meaningful bands such as Red and SWIR. Some baseline
channel maps appear more diffuse or highlight nearby structures less directly
tied to the annotated burn region. The spectral aggregation results also show that SSI more frequently
recovers burn-relevant bands.

\textbf{Class-specific spectral profiles on EuroSAT.}\quad
Vegetation classes are characterized by low red and high red-edge/NIR
reflectance~\citep{tucker1979red,clevers2002derivation}. SSI recovers this pattern. Red-edge and NIR bands receive the strongest
attribution while the red band ranks low. The compared baselines either miss the red-edge bands or
produce spectrally uniform scores. Figure~\ref{fig:aggregation_six_panel} shows a similar trend in the aggregate spectral results, where SSI achieves the lowest channel deletion AUC and leads at every Hit@$K$ threshold.

\begin{figure}[t]
    \centering
    \includegraphics[width=\linewidth]{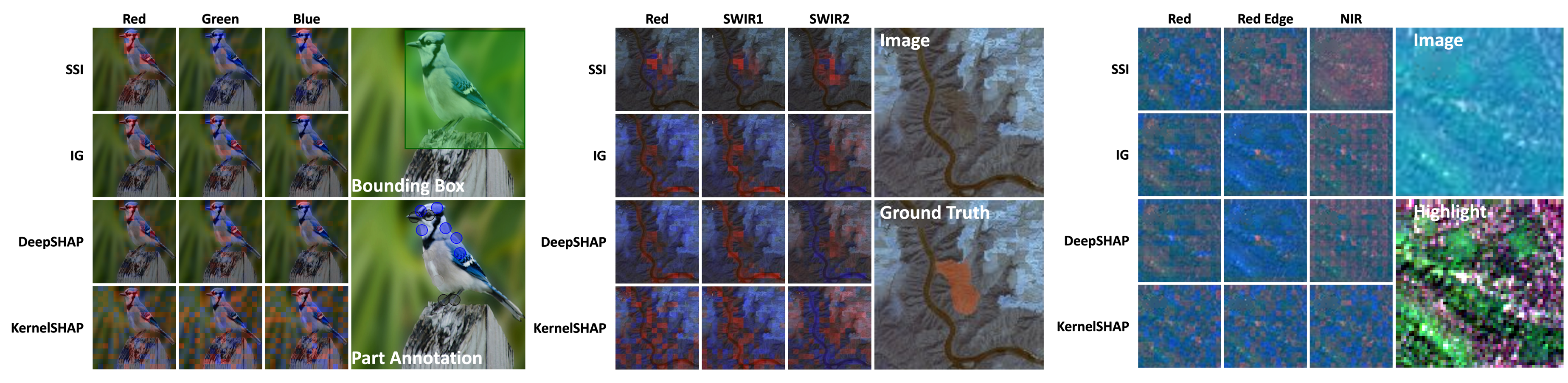}
    \caption{
    \textbf{Qualitative examples of spectral-spatial attribution.}
    Representative CUB-200, HLS Burn Scar, and EuroSAT examples showing
channel-aggregated maps and selected channel-level attributions. SSI illustrates
localized channel evidence that is not captured by a single heatmap or a global
channel ranking.
    }
    \label{fig:fig4}
\end{figure}

\subsection{Observed Context versus Context-Averaged Hierarchical Attribution}
\label{sec:exp_mc_owen_hls}
To compare observed-context attribution with context-averaged hierarchical accounting, we evaluate SSI against MC-Owen on HLS Burn Scar, the smallest player setting in our study ($|\mathcal{P}|=384$). Under the conservative forward-pass counting convention of Appendix~\ref{app:computational_cost}, MC-Owen with $M{=}200$ block-respecting permutations uses up to $M|\mathcal{P}|{=}76{,}800$ evaluations, whereas SSI uses up to $G\cdot K(C{+}1){=}4{,}480$ evaluations.
Table~\ref{tab:hls_owen_main}
shows that SSI outperforms MC-Owen on all retained metrics, including joint deletion,
rank-1 deletion, channel deletion, grid-sum insertion, spatial localization, and spectral
hit rates, while using roughly $6\%$ of the MC-Owen evaluation budget.
As shown in Appendix~\ref{app:empirical-concentration}, SSI's budget
yields uniform estimation errors below $10^{-2}$, suggesting that the current sampling budget is adequate
for the reported setting.
Both methods outperform the flat and rank-1 baselines in
Table~\ref{tab:joint_deletion}, confirming that the hierarchical grouping
is itself a useful inductive bias. The additional gain of SSI over MC-Owen reflects both the modeling choice
and the use of small $C$-player conditional games, which avoid the
external-coalition sampling variance inherent in MC-Owen.\footnotemark
\footnotetext{Permutation-based estimators converge at rate
$O(M^{-1/2})$~\citep{castro2009polynomial}. Increasing
$M$ to reduce noise would widen the cost gap substantially before
closing the performance gap.}
\begin{table}[t]
\centering
\caption{
\textbf{Observed-context SSI vs.\ context-averaged MC-Owen on HLS Burn Scar.}
Under the forward-pass counting convention of Appendix~\ref{app:computational_cost},
MC-Owen with $M{=}200$ block-respecting permutations uses up to $76{,}800$ evaluations,
and SSI uses up to $G\cdot K(C{+}1){=}4{,}480$ evaluations.
Grid-sum insertion ranks positions by channel-summed attribution.
Values are mean \(\pm\) bootstrap std.
}
\label{tab:hls_owen_main}
\small
\setlength{\tabcolsep}{3pt}
\resizebox{\linewidth}{!}{%
\begin{tabular}{@{}l ccc c c ccc ccc@{}}
\toprule
& \multicolumn{3}{c}{Deletion $\downarrow$}
& &
& \multicolumn{3}{c}{Spatial mIoU $\uparrow$}
& \multicolumn{3}{c}{Hit Rate $\uparrow$} \\
\cmidrule(lr){2-4} \cmidrule(lr){7-9} \cmidrule(lr){10-12}
Method
& Joint & Rk1 & Ch.
& Ins.$\uparrow$
&
& @5\% & @15\% & @25\%
& $K{=}1$ & $K{=}3$ & $K{=}5$ \\
\midrule
SSI
& \textbf{.155{\scriptsize$\pm$.008}}
& \textbf{.166{\scriptsize$\pm$.010}}
& \textbf{.218{\scriptsize$\pm$.009}}
& \textbf{.664{\scriptsize$\pm$.008}}
&
& \textbf{.311{\scriptsize$\pm$.012}}
& \textbf{.387{\scriptsize$\pm$.014}}
& \textbf{.332{\scriptsize$\pm$.017}}
& \textbf{.317{\scriptsize$\pm$.007}}
& \textbf{.603{\scriptsize$\pm$.016}}
& \textbf{.887{\scriptsize$\pm$.016}} \\
MC-Owen
& .173{\scriptsize$\pm$.010}
& .254{\scriptsize$\pm$.017}
& .261{\scriptsize$\pm$.017}
& .627{\scriptsize$\pm$.009}
&
& .299{\scriptsize$\pm$.018}
& .254{\scriptsize$\pm$.017}
& .224{\scriptsize$\pm$.017}
& .287{\scriptsize$\pm$.011}
& .460{\scriptsize$\pm$.017}
& .730{\scriptsize$\pm$.013} \\
\bottomrule
\end{tabular}
}

\end{table}

\subsection{Additional Gradient-Based Baselines}
\label{sec:exp_gradient_baselines}

On HLS Burn Scar, we compare SSI with Expected Gradients~\citep{erion2021expected},
Guided IG~\citep{kapishnikov2021guided}, and
FullGrad~\citep{srinivas2019fullgradient}. We use the
same position-channel evaluation metrics as in Table~\ref{tab:hls_owen_main}.
Table~\ref{tab:hls_gradient_main} shows that SSI performs best on all ten
reported metrics, covering joint and rank-1 deletion, channel deletion,
grid-sum insertion, spatial localization, and spectral hit rates. Additional method details appear in
Appendix~\ref{app:baseline-details}.

\begin{table}[h!]
\centering
\caption{\textbf{Additional gradient-based baselines on HLS Burn Scar.}
All methods are evaluated using the same position-channel player space.
Spatial mIoU is reported at top-5\%, top-15\%, and top-25\% coverage.
Values are mean $\pm$ reported variability. Best scores per column are in
\textbf{bold}.}
\label{tab:hls_gradient_main}
\small
\setlength{\tabcolsep}{3pt}
\resizebox{\linewidth}{!}{%
\begin{tabular}{@{}l ccc c ccc ccc@{}}
\toprule
 & \multicolumn{3}{c}{Deletion $\downarrow$}
 & & \multicolumn{3}{c}{Spatial mIoU $\uparrow$}
 & \multicolumn{3}{c}{Hit Rate $\uparrow$} \\
\cmidrule(lr){2-4} \cmidrule(lr){6-8} \cmidrule(lr){9-11}
Method & Joint & Rk1 & Ch. & Ins.$\uparrow$
 & @5\% & @15\% & @25\% & $K{=}1$ & $K{=}3$ & $K{=}5$ \\
\midrule
SSI
 & \textbf{.155{\scriptsize$\pm$.008}}
 & \textbf{.166{\scriptsize$\pm$.010}}
 & \textbf{.218{\scriptsize$\pm$.009}}
 & \textbf{.664{\scriptsize$\pm$.008}}
 & \textbf{.311{\scriptsize$\pm$.012}}
 & \textbf{.387{\scriptsize$\pm$.014}}
 & \textbf{.332{\scriptsize$\pm$.017}}
 & \textbf{.317{\scriptsize$\pm$.007}}
 & \textbf{.603{\scriptsize$\pm$.016}}
 & \textbf{.887{\scriptsize$\pm$.016}} \\
Expected Gradients
 & .185{\scriptsize$\pm$.010}
 & .238{\scriptsize$\pm$.015}
 & .275{\scriptsize$\pm$.017}
 & .637{\scriptsize$\pm$.010}
 & .292{\scriptsize$\pm$.016}
 & .284{\scriptsize$\pm$.016}
 & .236{\scriptsize$\pm$.018}
 & .287{\scriptsize$\pm$.005}
 & .510{\scriptsize$\pm$.020}
 & .720{\scriptsize$\pm$.012} \\
Guided IG
 & .185{\scriptsize$\pm$.011}
 & .244{\scriptsize$\pm$.016}
 & .280{\scriptsize$\pm$.017}
 & .630{\scriptsize$\pm$.011}
 & .296{\scriptsize$\pm$.016}
 & .283{\scriptsize$\pm$.015}
 & .233{\scriptsize$\pm$.018}
 & .293{\scriptsize$\pm$.005}
 & .500{\scriptsize$\pm$.019}
 & .703{\scriptsize$\pm$.010} \\
FullGrad
 & .278{\scriptsize$\pm$.014}
 & .277{\scriptsize$\pm$.014}
 & .581{\scriptsize$\pm$.013}
 & .543{\scriptsize$\pm$.015}
 & .131{\scriptsize$\pm$.018}
 & .135{\scriptsize$\pm$.015}
 & .128{\scriptsize$\pm$.011}
 & .000{\scriptsize$\pm$.000}
 & .333{\scriptsize$\pm$.000}
 & .667{\scriptsize$\pm$.000} \\
\bottomrule
\end{tabular}%
}
\end{table}

\section{Conclusion}
\label{sec:discussion}

We introduced Spectral-Spatial Interpretation (SSI), which explains
predictions of multi-channel image models at the level of joint
position-channel players $(g,c)$.  SSI fixes the observed surrounding context and allocates each
position's contribution among its channels via the Shapley value, producing a
full attribution matrix whose channel aggregation exactly recovers the
position-level contextual contrast. Across CUB-200, EuroSAT, and HLS Burn
Scar, SSI improves joint faithfulness, spectral hit rates, and channel
deletion over dense, flat, and rank-1 baselines, and outperforms MC-Owen on
HLS Burn Scar with roughly \(6\%\) of its evaluation budget.

Although SSI's estimation error grows only as $\sqrt{\ln C}$ and its
per-image cost grows only linearly in $C$
(Appendix~\ref{app:scalability}), applying SSI to hyperspectral sensors
with hundreds of correlated bands may benefit from channel screening or
adaptive grouping to reduce spectral redundancy before local Shapley
allocation. Validating such extensions on operational hyperspectral
missions remains future work. SSI may also be useful for imaging spectroscopy data from sensors such as EnMAP, where spectral channels carry spatially localized physical information. The observed-context formulation may extend to other multi-channel imaging domains with localized physical meaning, including biomedical microscopy and industrial materials inspection.

\paragraph{Code availability.}
The open-source SSI implementation and its usage documentation are available at
\SSIRepoURL.

\begin{ack}
We thank the reviewers and area chairs for their constructive feedback.

\textbf{Funding.}
No third-party funding or support was received for this work.

\textbf{Competing interests.}
The authors declare no competing interests.
\end{ack}

\bibliographystyle{plainnat}
\bibliography{references}

@inproceedings{selvaraju2017grad,
  title={Grad-{CAM}: Visual explanations from deep networks via gradient-based localization},
  author={Selvaraju, Ramprasaath R and Cogswell, Michael and Das, Abhishek and Vedantam, Ramakrishna and Parikh, Devi and Batra, Dhruv},
  booktitle={Proceedings of the IEEE International Conference on Computer Vision (ICCV)},
  pages={618--626},
  year={2017}
}

@inproceedings{sundararajan2017axiomatic,
  title={Axiomatic attribution for deep networks},
  author={Sundararajan, Mukund and Taly, Ankur and Yan, Qiqi},
  booktitle={Proceedings of the 34th International Conference on Machine Learning (ICML)},
  pages={3319--3328},
  year={2017}
}

@inproceedings{petsiuk2018rise,
  title={{RISE}: Randomized input sampling for explanation of black-box models},
  author={Petsiuk, Vitali and Das, Abir and Saenko, Kate},
  booktitle={British Machine Vision Conference (BMVC)},
  year={2018}
}

@inproceedings{fong2019understanding,
  title={Understanding deep networks via extremal perturbations and smooth masks},
  author={Fong, Ruth and Patrick, Mandela and Vedaldi, Andrea},
  booktitle={Proceedings of the IEEE/CVF International Conference on Computer Vision (ICCV)},
  pages={2950--2958},
  year={2019}
}

@inproceedings{lundberg2017unified,
  title={A unified approach to interpreting model predictions},
  author={Lundberg, Scott M and Lee, Su-In},
  booktitle={Advances in Neural Information Processing Systems (NeurIPS)},
  volume={30},
  year={2017}
}

@inproceedings{hu2018squeeze,
  title={Squeeze-and-excitation networks},
  author={Hu, Jie and Shen, Li and Sun, Gang},
  booktitle={Proceedings of the IEEE Conference on Computer Vision and Pattern Recognition (CVPR)},
  pages={7132--7141},
  year={2018}
}

@inproceedings{woo2018cbam,
  title={{CBAM}: Convolutional block attention module},
  author={Woo, Sanghyun and Park, Jongchan and Lee, Joon-Young and Kweon, In So},
  booktitle={European Conference on Computer Vision (ECCV)},
  pages={3--19},
  year={2018},
  publisher={Springer}
}

@inproceedings{bruce2013game,
  title={Game theory applied to big data analytics in geosciences and remote sensing},
  author={Bruce, Lori Mann},
  booktitle={Proceedings of the IEEE International Geoscience and Remote Sensing Symposium (IGARSS)},
  year={2013}
}

@article{zhong2022spectral,
  title={Spectral--spatial transformer network for hyperspectral image classification: A factorized architecture search framework},
  author={Zhong, Zilong and Li, Ying and Ma, Lingfei and Li, Jonathan and Zheng, Wei-Shi},
  journal={IEEE Transactions on Geoscience and Remote Sensing},
  volume={60},
  pages={1--15},
  year={2022},
  publisher={IEEE}
}

@article{chen2016deep,
  title={Deep feature extraction and classification of hyperspectral images based on convolutional neural networks},
  author={Chen, Yushi and Jiang, Hanlu and Li, Chunyang and Jia, Xiuping and Ghamisi, Pedram},
  journal={IEEE Transactions on Geoscience and Remote Sensing},
  volume={54},
  number={10},
  pages={6232--6251},
  year={2016},
  publisher={IEEE}
}

@article{dang2022spectral,
  title={Spectral-spatial attention transformer with dense connection for hyperspectral image classification},
  author={Dang, Lanxue and Weng, Libo and Dong, Weichuan and Li, Shenshen and Hou, Yane},
  journal={Computational Intelligence and Neuroscience},
  volume={2022},
  pages={7071485},
  year={2022},
  publisher={Hindawi}
}

@incollection{shapley1953value,
  title={A value for $n$-person games},
  author={Shapley, Lloyd S},
  booktitle={Contributions to the Theory of Games II},
  editor={Kuhn, Harold W and Tucker, Albert W},
  pages={307--317},
  year={1953},
  publisher={Princeton University Press}
}

@article{grabisch1999axiomatic,
  title={An axiomatic approach to the concept of interaction among players in cooperative games},
  author={Grabisch, Michel and Roubens, Marc},
  journal={International Journal of Game Theory},
  volume={28},
  number={4},
  pages={547--565},
  year={1999},
  publisher={Springer}
}

@inproceedings{sundararajan2020shapley,
  title={The {S}hapley {T}aylor interaction index},
  author={Sundararajan, Mukund and Dhamdhere, Kedar and Agarwal, Ashish},
  booktitle={Proceedings of the 37th International Conference on Machine Learning (ICML)},
  pages={9259--9268},
  year={2020}
}

@article{tsai2023faith,
  title={Faith-{S}hap: The faithful {S}hapley interaction index},
  author={Tsai, Che-Ping and Yeh, Chih-Kuan and Ravikumar, Pradeep},
  journal={Journal of Machine Learning Research},
  volume={24},
  number={94},
  pages={1--42},
  year={2023}
}

@article{jullum2021groupshapley,
  title={group{S}hapley: Efficient prediction explanation with {S}hapley values for feature groups},
  author={Jullum, Martin and Redelmeier, Annabelle and Aas, Kjersti},
  journal={arXiv preprint arXiv:2106.12228},
  year={2021},
  url={https://arxiv.org/abs/2106.12228}
}

@article{aas2021explaining,
  title={Explaining individual predictions when features are dependent: More accurate approximations to {S}hapley values},
  author={Aas, Kjersti and Jullum, Martin and L{\o}land, Anders},
  journal={Artificial Intelligence},
  volume={298},
  pages={103502},
  year={2022},
  publisher={Elsevier}
}

@incollection{owen1977values,
  title={Values of games with a priori unions},
  author={Owen, Guillermo},
  booktitle={Mathematical Economics and Game Theory: Essays in Honor of Oskar Morgenstern},
  editor={Henn, Rudolf and Moeschlin, Otto},
  series={Lecture Notes in Economics and Mathematical Systems},
  volume={141},
  pages={76--88},
  year={1977},
  publisher={Springer}
}

@article{aumann1974cooperative,
  title={Cooperative games with coalition structures},
  author={Aumann, Robert J and Dr{\`e}ze, Jacques H},
  journal={International Journal of Game Theory},
  volume={3},
  number={4},
  pages={217--237},
  year={1974},
  publisher={Springer}
}

@article{nagasubramanian2018explaining,
  title     = {Explaining hyperspectral imaging based plant disease identification:
               {3D CNN} and saliency maps},
  author    = {Nagasubramanian, Koushik and Jones, Sarah and Singh, Asheesh K
               and Singh, Arti and Ganapathysubramanian, Baskar and Sarkar, Soumik},
  journal   = {arXiv preprint arXiv:1804.08831},
  year      = {2018}
}

@techreport{wah2011caltech,
  title     = {The {Caltech-UCSD} Birds-200-2011 Dataset},
  author    = {Wah, Catherine and Branson, Steve and Welinder, Peter and Perona, Pietro and Belongie, Serge},
  institution = {California Institute of Technology},
  number    = {CNS-TR-2011-001},
  year      = {2011}
}

@inproceedings{shrikumar2017learning,
  title     = {Learning Important Features Through Propagating Activation Differences},
  author    = {Shrikumar, Avanti and Greenside, Peyton and Kundaje, Anshul},
  booktitle = {Proceedings of the 34th International Conference on Machine Learning (ICML)},
  pages     = {3145--3153},
  year      = {2017}
}

@article{helber2019eurosat,
  title     = {{EuroSAT}: A Novel Dataset and Deep Learning Benchmark
               for Land Use and Land Cover Classification},
  author    = {Helber, Patrick and Bischke, Benjamin and Dengel, Andreas
               and Borth, Damian},
  journal   = {IEEE Journal of Selected Topics in Applied Earth
               Observations and Remote Sensing},
  volume    = {12},
  number    = {7},
  pages     = {2217--2226},
  year      = {2019},
  publisher = {IEEE}
}

@misc{phillips2023hlsburnscar,
  title     = {{HLS Foundation Burnscars Dataset}},
  author    = {Phillips, Christopher and Roy, Sujit and Ankur, Kumar
               and Ramachandran, Rahul},
  year      = {2023},
  month     = aug,
  doi       = {10.57967/hf/0956},
  howpublished = {\url{https://huggingface.co/datasets/ibm-nasa-geospatial/hls_burn_scars}},
  note      = {Dataset. DOI: \url{https://doi.org/10.57967/hf/0956}}
}

@article{feng2014trivariate,
  title={Hyperspectral Band Selection Based on Trivariate Mutual Information and Clonal Selection},
  author={Feng, Jie and Jiao, Licheng and Zhang, Xiangrong and Sun, Tao},
  journal={IEEE Transactions on Geoscience and Remote Sensing},
  volume={52},
  number={7},
  pages={4092--4105},
  year={2014},
  doi={10.1109/TGRS.2013.2279591},
  publisher={IEEE}
}

@article{claverie2018harmonized,
  title     = {The {Harmonized Landsat and Sentinel-2} surface reflectance data set},
  author    = {Claverie, Martin and Ju, Junchang and Masek, Jeffrey G and Dungan, Jennifer L
               and Vermote, Eric F and Roger, Jean-Claude and Skakun, Sergii and Justice, Chris},
  journal   = {Remote Sensing of Environment},
  volume    = {219},
  pages     = {145--161},
  year      = {2018},
  publisher = {Elsevier}
}

@techreport{key2006landscape,
  title       = {Landscape Assessment ({LA}): Sampling and Analysis Methods},
  author      = {Key, Carl H and Benson, Nathan C},
  institution = {USDA Forest Service, Rocky Mountain Research Station},
  number      = {RMRS-GTR-164-CD},
  year        = {2006}
}

@article{garcia1991mapping,
  title     = {Mapping burns and natural reforestation using thematic mapper data},
  author    = {Garc{\'\i}a, Mar{\'\i}a J L and Caselles, Vicente},
  journal   = {Geocarto International},
  volume    = {6},
  number    = {1},
  pages     = {31--37},
  year      = {1991},
  publisher = {Taylor \& Francis}
}

@article{trigg2001mirbi,
  title     = {An evaluation of different bi-spectral spaces for discriminating burned shrub-savannah},
  author    = {Trigg, Simon and Flasse, Stephane},
  journal   = {International Journal of Remote Sensing},
  volume    = {22},
  number    = {13},
  pages     = {2641--2647},
  year      = {2001},
  publisher = {Taylor \& Francis}
}

@article{clevers2002derivation,
  title     = {Derivation of the red edge index using the {MERIS} standard band setting},
  author    = {Clevers, Jan G P W and De Jong, Steven M and Epema, Gerrit F
               and Van Der Meer, Freek and Bakker, Wim H and Skidmore, Andrew K
               and Scholte, Klass H},
  journal   = {International Journal of Remote Sensing},
  volume    = {23},
  number    = {16},
  pages     = {3169--3184},
  year      = {2002},
  publisher = {Taylor \& Francis}
}

@article{delegido2011evaluation,
  title     = {Evaluation of {Sentinel-2} red-edge bands for empirical estimation of green {LAI} and chlorophyll content},
  author    = {Delegido, Jes{\'u}s and Verrelst, Jochem and Alonso, Luis and Moreno, Jos{\'e}},
  journal   = {Sensors},
  volume    = {11},
  number    = {7},
  pages     = {7063--7081},
  year      = {2011},
  publisher = {MDPI}
}

@article{tucker1979red,
  title     = {Red and photographic infrared linear combinations for monitoring vegetation},
  author    = {Tucker, Compton J},
  journal   = {Remote Sensing of Environment},
  volume    = {8},
  number    = {2},
  pages     = {127--150},
  year      = {1979},
  publisher = {Elsevier}
}

@inproceedings{rouse1974monitoring,
  title     = {Monitoring vegetation systems in the {Great Plains} with {ERTS}},
  author    = {Rouse, John W and Haas, Rudolf H and Schell, John A and Deering, Donald W},
  booktitle = {Third Earth Resources Technology Satellite-1 Symposium},
  volume    = {1},
  pages     = {309--317},
  year      = {1974},
  organization = {NASA}
}

@article{zha2003use,
  title     = {Use of normalized difference built-up index in automatically mapping urban areas from {TM} imagery},
  author    = {Zha, Yong and Gao, Jay and Ni, Shaoxiang},
  journal   = {International Journal of Remote Sensing},
  volume    = {24},
  number    = {3},
  pages     = {583--594},
  year      = {2003},
  publisher = {Taylor \& Francis}
}

@article{xu2008new,
  title     = {A new index for delineating built-up land features in satellite imagery},
  author    = {Xu, Hanqiu},
  journal   = {International Journal of Remote Sensing},
  volume    = {29},
  number    = {14},
  pages     = {4269--4276},
  year      = {2008},
  publisher = {Taylor \& Francis}
}

@article{castro2009polynomial,
  title     = {Polynomial calculation of the {Shapley} value based on sampling},
  author    = {Castro, Javier and G{\'o}mez, Daniel and Tejada, Juan},
  journal   = {Computers \& Operations Research},
  volume    = {36},
  number    = {5},
  pages     = {1726--1730},
  year      = {2009},
  publisher = {Elsevier}
}

@inproceedings{rong2022road,
  title = {A Consistent and Efficient Evaluation Strategy for Attribution Methods},
  author = {Rong, Yao and Leemann, Tobias and Borisov, Vadim and Kasneci, Gjergji and Kasneci, Enkelejda},
  booktitle = {Proceedings of the 39th International Conference on Machine Learning},
  volume = {162},
  series = {Proceedings of Machine Learning Research},
  pages = {18770--18795},
  year = {2022},
  publisher = {PMLR},
  url = {https://proceedings.mlr.press/v162/rong22a.html}
}

@article{erion2021expected,
  title = {Improving performance of deep learning models with axiomatic attribution priors and expected gradients},
  author = {Erion, Gabriel and Janizek, Joseph D. and Sturmfels, Pascal and Lundberg, Scott M. and Lee, Su-In},
  journal = {Nature Machine Intelligence},
  volume = {3},
  pages = {620--631},
  year = {2021},
  doi = {10.1038/s42256-021-00343-w}
}

@inproceedings{kapishnikov2021guided,
  title = {Guided Integrated Gradients: An Adaptive Path Method for Removing Noise},
  author = {Kapishnikov, Andrei and Venugopalan, Subhashini and Avci, Besim and Wedin, Ben and Terry, Michael and Bolukbasi, Tolga},
  booktitle = {Proceedings of the IEEE/CVF Conference on Computer Vision and Pattern Recognition},
  pages = {5050--5058},
  year = {2021},
  url = {https://openaccess.thecvf.com/content/CVPR2021/html/Kapishnikov_Guided_Integrated_Gradients_An_Adaptive_Path_Method_for_Removing_Noise_CVPR_2021_paper.html}
}

@inproceedings{srinivas2019fullgradient,
  title = {Full-Gradient Representation for Neural Network Visualization},
  author = {Srinivas, Suraj and Fleuret, Fran{\c{c}}ois},
  booktitle = {Advances in Neural Information Processing Systems},
  volume = {32},
  year = {2019},
  url = {https://proceedings.neurips.cc/paper_files/paper/2019/hash/80537a945c7aaa788ccfcdf1b99b5d8f-Abstract.html}
}

\clearpage
\appendix

\section{Additional Method Details}
\label{app:method-details}

This appendix provides proofs of the local Shapley characterization
(Appendix~\ref{app:local-axioms}), contextual aggregation consistency
(Appendix~\ref{app:proof-contextual-aggregation}), and the conflict between
contextual aggregation and global efficiency
(Appendix~\ref{app:proof-global-conflict}). The sampling estimator and
concentration bound appear in Appendix~\ref{app:ssi-sampling}. We discuss
flat Shapley in Appendix~\ref{app:flat-family} and the relation to
context-averaged Owen-style accounting in Appendix~\ref{app:owen-relation}.

\subsection{Local Shapley Axioms}
\label{app:local-axioms}

Fix a spatial position \(g\) and its conditional game
\[
w_g:2^{S_g}\to\mathbb{R},
\qquad
w_g(R)=v(B_g\cup R)-v(B_g).
\]
A local channel attribution is a map \(\beta_g:S_g\to\mathbb{R}\). SSI uses the
standard Shapley value on this local game. The relevant axioms are local rather
than global.

\begin{description}
\item[(L1) Local efficiency.]
\[
\sum_{p\in S_g}\beta_g(p)=w_g(S_g)-w_g(\emptyset)=w_g(S_g).
\]

\item[(L2) Within-position symmetry.]
If \(p,q\in S_g\) satisfy
\[
w_g(R\cup\{p\})=w_g(R\cup\{q\})
\]
for every \(R\subseteq S_g\setminus\{p,q\}\), then
\[
\beta_g(p)=\beta_g(q).
\]

\item[(L3) Within-position null player.]
If \(p\in S_g\) satisfies
\[
w_g(R\cup\{p\})=w_g(R)
\]
for every \(R\subseteq S_g\setminus\{p\}\), then
\[
\beta_g(p)=0.
\]

\item[(L4) Additivity.]
For two conditional games \(w_g^{(1)}\) and \(w_g^{(2)}\) on the same channel
set,
\[
\beta_g^{w_g^{(1)}+w_g^{(2)}}(p)
=
\beta_g^{w_g^{(1)}}(p)+\beta_g^{w_g^{(2)}}(p).
\]
\end{description}

\begin{proof}[Proof of Theorem~\ref{thm:local-ssi}]
For a fixed position \(g\), \(w_g\) is an ordinary cooperative game over the
finite player set \(S_g\). The Shapley value is the unique attribution
satisfying efficiency, symmetry, null-player, and additivity on this game.
Applying the standard Shapley characterization to \(w_g\) gives
\[
\beta_g(p)=\mathrm{Sh}_p(w_g),
\qquad p\in S_g.
\]
Substituting \(p=(g,c)\) yields
\[
\Phi(g,c)=\mathrm{Sh}_c(w_g),
\]
which is Eq.~\eqref{eq:full-context}.
\end{proof}

\subsection{Proof of Contextual Aggregation Consistency}
\label{app:proof-contextual-aggregation}

\begin{proof}[Proof of Theorem~\ref{thm:positive}]
By local Shapley efficiency in the conditional game \(w_g\),
\[
\sum_{c\in\mathcal{C}}\Phi(g,c)
=
w_g(S_g)-w_g(\emptyset).
\]
Since \(w_g(\emptyset)=0\) and
\[
w_g(S_g)=v(B_g\cup S_g)-v(B_g),
\]
we have
\[
\sum_{c\in\mathcal{C}}\Phi(g,c)
=
v(B_g\cup S_g)-v(B_g).
\]
Using \(B_g=\mathcal{P}\setminus S_g\) and \(B_g\cup S_g=\mathcal{P}\),
\[
\sum_{c\in\mathcal{C}}\Phi(g,c)
=
v(\mathcal{P})-v(\mathcal{P}\setminus S_g)
=
\Delta(g).
\]
\end{proof}

\subsection{Contextual Aggregation Versus Global Efficiency}
\label{app:proof-global-conflict}

\begin{proof}[Proof of Proposition~\ref{prop:global-efficiency-conflict}]
Suppose an attribution rule satisfied contextual aggregation for every position:
\[
\sum_{c\in\mathcal{C}}\alpha(g,c)=\Delta(g).
\]
Summing over \(g\) gives
\[
\sum_{g\in\mathcal{G}}\sum_{c\in\mathcal{C}}\alpha(g,c)
=
\sum_{g\in\mathcal{G}}\Delta(g).
\]
If the same rule also satisfied global efficiency, then every game would require
\[
\sum_{g\in\mathcal{G}}\Delta(g)=v(\mathcal{P}).
\]
This identity is false in general.

As a counterexample, consider two spatial positions \(g,h\), each with one
channel, so \(\mathcal{P}=\{a,b\}\). Define
\[
v(\{a,b\})=1,
\qquad
v(\emptyset)=v(\{a\})=v(\{b\})=0.
\]
Then
\[
\Delta(g)=v(\{a,b\})-v(\{b\})=1,
\qquad
\Delta(h)=v(\{a,b\})-v(\{a\})=1,
\]
so \(\Delta(g)+\Delta(h)=2\), while \(v(\mathcal{P})=1\). Therefore contextual
aggregation and global efficiency cannot be guaranteed simultaneously for all
games.
\end{proof}

\subsection{Sampling Estimator and Concentration}
\label{app:ssi-sampling}

For a position \(g\), exact SSI evaluates all subsets of \(S_g\):
\[
\Phi(g,c)
=
\sum_{R\subseteq S_g\setminus\{(g,c)\}}
\frac{|R|!(C-|R|-1)!}{C!}
\left[
w_g(R\cup\{(g,c)\})-w_g(R)
\right].
\]
This requires \(2^C\) subset evaluations per spatial position.

For larger \(C\), we estimate the same Shapley value by permutation sampling.
Let \(\pi\) be a uniformly random permutation of the channels in \(S_g\), and
let \(\mathrm{Pred}_{\pi}(c)\) denote the channels preceding \(c\). Define
\[
M_{g,c}(\pi)
=
w_g(\mathrm{Pred}_{\pi}(c)\cup\{(g,c)\})
-
w_g(\mathrm{Pred}_{\pi}(c)).
\]
With \(K\) independent permutations,
\begin{equation}
\widehat{\Phi}_K(g,c)
=
\frac{1}{K}\sum_{k=1}^K M_{g,c}(\pi_k).
\label{eq:ssi-mc}
\end{equation}

\begin{proposition}[Unbiasedness and concentration of SSI sampling]
\label{prop:ssi_sampling_concentration}
For every \(g,c\),
\[
\mathbb{E}[\widehat{\Phi}_K(g,c)]=\Phi(g,c).
\]
If \(M_{g,c}(\pi)\in[a_{g,c},b_{g,c}]\) for all permutations and
\[
R=\max_{g,c}(b_{g,c}-a_{g,c}),
\]
then for any \(\varepsilon>0\),
\[
\Pr\!\left(
\max_{g,c}
|\widehat{\Phi}_K(g,c)-\Phi(g,c)|
\geq \varepsilon
\right)
\leq
2|\mathcal G|C
\exp\!\left(
-\frac{2K\varepsilon^2}{R^2}
\right).
\]
Equivalently, with probability at least \(1-\delta\),
\[
\max_{g,c}
|\widehat{\Phi}_K(g,c)-\Phi(g,c)|
\leq
R
\sqrt{
\frac{\log(2|\mathcal G|C/\delta)}{2K}
}.
\]
\end{proposition}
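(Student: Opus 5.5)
The plan is to prove unbiasedness by identifying the permutation representation of the Shapley value, and then obtain the uniform bound from Hoeffding's inequality combined with a union bound over all position-channel pairs.

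\emph{Unbiasedness.} Fix \(g,c\). Under a uniformly random permutation \(\pi\) of the \(C\) channels in \(S_g\), the predecessor set \(\mathrm{Pred}_\pi(c)\) equals a given \(R\subseteq S_g\setminus\{(g,c)\}\) with probability \(|R|!(C-|R|-1)!/C!\). This is because the orderings with exactly \(R\) before \(c\) are counted by \(|R|!\) arrangements of \(R\) times \((C-|R|-1)!\) arrangements of the remaining channels. Hence
\[
\mathbb{E}[M_{g,c}(\pi)]=\sum_{R\subseteq S_g\setminus\{(g,c)\}}\frac{|R|!(C-|R|-1)!}{C!}\bigl[w_g(R\cup\{(g,c)\})-w_g(R)\bigr].
\]
This is exactly Eq.~\eqref{eq:ssi-expanded}, so \(\mathbb{E}[M_{g,c}(\pi)]=\Phi(g,c)\). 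Linearity of expectation over the \(K\) i.i.d.\ permutations in Eq.~\eqref{eq:ssi-mc} then gives \(\mathbb{E}[\widehat{\Phi}_K(g,c)]=\Phi(g,c)\).

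\emph{Concentration.} For fixed \((g,c)\), the estimator \(\widehat{\Phi}_K(g,c)\) is an average of \(K\) independent random variables \(M_{g,c}(\pi_k)\), each lying in \([a_{g,c},b_{g,c}]\) with common mean \(\Phi(g,c)\). Hoeffding's inequality gives
\[
\Pr\bigl(|\widehat{\Phi}_K(g,c)-\Phi(g,c)|\ge\varepsilon\bigr)\le 2\exp\bigl(-2K\varepsilon^2/(b_{g,c}-a_{g,c})^2\bigr)\le 2\exp\bigl(-2K\varepsilon^2/R^2\bigr),
\]
where the last step uses \(b_{g,c}-a_{g,c}\le R\). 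The estimates for different \((g,c)\) may be dependent, since permutations are shared across channels at the same position and possibly reused across positions. I therefore use only a union bound over the \(|\mathcal G|C\) pairs, which requires no independence across pairs and yields the stated tail bound. To get the high-probability form, I set the right-hand side equal to \(\delta\) and solve for \(\varepsilon=R\sqrt{\log(2|\mathcal G|C/\delta)/(2K)}\).

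\emph{Main point to check.} The only delicate step is the combinatorial identity for the distribution of \(\mathrm{Pred}_\pi(c)\); everything after it is standard. I also need to confirm that the union bound does not rely on independence across \(g\) or \(c\). It does not, so correlated permutation reuse within a position is harmless.
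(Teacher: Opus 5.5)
Your proposal is correct and matches the paper's proof: the permutation representation of the Shapley value gives unbiasedness, then per-pair Hoeffding with the interval width bounded by $R$, then a union bound over the $|\mathcal G|C$ position-channel pairs. You also spell out two things the paper leaves implicit: the count $|R|!(C-|R|-1)!/C!$ behind the permutation representation, and the fact that the union bound needs no independence across pairs.
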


\begin{proof}
For fixed \(g,c\), the Shapley value is the expectation of the marginal
contribution over a uniformly random channel permutation:
\[
\Phi(g,c)=\mathbb{E}_{\pi}[M_{g,c}(\pi)].
\]
Thus \(\widehat{\Phi}_K(g,c)\) is unbiased. Since the samples are independent
and bounded in an interval of width at most \(R\), Hoeffding's inequality gives
the stated tail bound for each \(g,c\). A union bound over all
\(|\mathcal G|C\) position-channel players gives the result.
\end{proof}

The bound does not imply that a fixed small \(K\) is universally sufficient for
every possible game. It shows that SSI estimates \(G\) local \(C\)-player games,
rather than a single full \(GC\)-player game. In our experiments we use
\(K=\min(2^C,10)\) permutations per grid cell, or enumerate all \(2^C\) subsets
when \(2^C\leq 10\); the corresponding total forward-pass counts are reported
in Appendix~\ref{app:computational_cost}.

\subsection{Empirical Instantiation of the Concentration Bound}
\label{app:empirical-concentration}

On CUB-200 ($C{=}3$), $2^C{=}8\leq10$, so SSI enumerates all channel
subsets exactly and no sampling variance arises. For the two datasets
where permutation sampling is used, we measure the empirical
marginal-contribution range $R{=}\max_{g,c}(b_{g,c}-a_{g,c})$ and
substitute into Proposition~\ref{prop:ssi_sampling_concentration} at
confidence $1{-}\delta{=}0.95$. The uniform bound is
\[
R\sqrt{\frac{\ln(2|\mathcal{G}|C/\delta)}{2K}}.
\]

\paragraph{EuroSAT} ($|\mathcal{G}|{=}64$, $C{=}13$, $K{=}10$).
The observed range is $R{\approx}0.0093$. The logarithmic term is
$\ln(2\times64\times13\,/\,0.05)=\ln(33280)\approx10.41$, giving
\[
0.0093\times\sqrt{\frac{10.41}{20}}
\;=\;0.0093\times0.721
\;\approx\;0.0067.
\]
This is a uniform bound over all $|\mathcal{G}|C{=}832$
position-channel players simultaneously.

\paragraph{HLS Burn Scar} ($|\mathcal{G}|{=}64$, $C{=}6$, $K{=}10$).
The observed range is $R{\approx}0.0062$. The logarithmic term is
$\ln(2\times64\times6\,/\,0.05)=\ln(15360)\approx9.64$, giving
\[
0.0062\times\sqrt{\frac{9.64}{20}}
\;=\;0.0062\times0.694
\;\approx\;0.0043.
\]
This is a uniform bound over all $|\mathcal{G}|C{=}384$ players.

\paragraph{Summary.}
Both bounds are well below the inter-method gaps reported in
Tables~\ref{tab:joint_deletion} and~\ref{tab:hls_owen_main},
confirming that $K{=}10$ provides sufficient estimation precision
for the reported comparisons.

\subsection{Relation to Flat Shapley}
\label{app:flat-family}

Flat Shapley treats every position-channel pair \((g,c)\) as an independent
player in a single game over \(\mathcal{P}\). SSI differs because it fixes the observed external context \(B_g\) and
allocates value only among the channels in \(S_g\). Flat Shapley averages over arbitrary subsets of \(\mathcal{P}\),
including mixed spatial configurations in which only some channels of another
position are present.

The following example shows that flat Shapley can change the summed attribution
assigned to a position even when the quotient game over positions is unchanged.

\begin{proposition}
\label{prop:flat-shapley}
The flat Shapley value \(\phi\) does not satisfy position-level hierarchical
aggregability. There exist value functions \(v_1,v_2\) with identical quotient
games over spatial positions but
\[
\sum_c \phi^{v_1}(g,c)
\neq
\sum_c \phi^{v_2}(g,c).
\]
\end{proposition}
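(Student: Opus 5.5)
We prove Proposition~\ref{prop:flat-shapley} by an explicit small counterexample. First we fix what the quotient game over positions means. For a set of positions \(T\subseteq\mathcal{G}\), let \(\bar v(T)=v(\bigcup_{g\in T}S_g)\). This is the restriction of \(v\) to coalitions that contain every channel of each included position. We then construct two value functions that agree on all such block coalitions but differ on some mixed coalition, which contains only part of a position's channels. The flat Shapley value averages over mixed coalitions, so it will detect the difference. Hence the per-position sums \(\sum_c\phi(g,c)\) differ even though the quotient games coincide.

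\textbf{Construction.} We take the smallest nontrivial setting: two positions \(g,h\). Position \(g\) has two channels \(a_1,a_2\), and \(h\) has one channel \(b\). (Equivalently, take \(C=2\) with \(h\)'s second channel a null player in both games.) Define \(v_1\) as the additive game
\[
v_1(S)=\tfrac12|S\cap\{a_1,a_2\}|,
\]
so that \(v_1(S_g)=1\), \(v_1(S_h)=0\), and \(v_1(\mathcal{P})=1\).

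For \(v_2\), keep the same values on block coalitions:
\[
v_2(\emptyset)=0,\quad v_2(S_g)=1,\quad v_2(S_h)=0,\quad v_2(\mathcal{P})=1.
\]
On mixed coalitions, let a single channel of \(g\) be worth \(1\) only when \(b\) is absent:
\[
v_2(\{a_i\})=1,\qquad v_2(\{a_i,b\})=0 .
\]
Both games have \(v(\emptyset)=0\), and both induce the same quotient game \(\bar v\) on \(\{g,h\}\).

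\textbf{Computation.} In \(v_1\), player \(b\) is null, so by symmetry and efficiency \(\phi^{v_1}(a_i)=\tfrac12\) and \(\sum_c\phi^{v_1}(g,c)=1\). In \(v_2\), flat Shapley efficiency gives \(\sum_c\phi^{v_2}(g,c)=1-\phi^{v_2}(b)\), so it suffices to show \(\phi^{v_2}(b)\neq 0\). We compute \(\phi^{v_2}(b)\) by averaging \(b\)'s marginal contributions over the six orderings of \(\{a_1,a_2,b\}\).
\begin{itemize}
\item When \(b\) comes first (2 orderings), its marginal is \(0-0=0\).
\item When \(b\) comes second (2 orderings), its marginal is \(v_2(\{a_i,b\})-v_2(\{a_i\})=-1\).
\item When \(b\) comes last (2 orderings), its marginal is \(1-1=0\).
\end{itemize}
This gives \(\phi^{v_2}(b)=-\tfrac13\). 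Therefore
\[
\sum_c\phi^{v_2}(g,c)=\tfrac43\neq 1=\sum_c\phi^{v_1}(g,c).
\]

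\textbf{Main obstacle.} The step needing the most care is the quotient-game definition. The construction is valid only if the quotient game is understood as the restriction of \(v\) to block coalitions, which is the convention standard for the Owen value. With that convention stated, everything else is routine arithmetic. As a sanity check, the counterexample should also be contrasted with Theorem~\ref{thm:positive}: SSI's position sum equals \(\Delta(g)=v(\mathcal{P})-v(S_h)=1\) for both games. This confirms that the discrepancy comes from the flat averaging over mixed coalitions, not from the games themselves.
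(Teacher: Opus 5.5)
Your counterexample is correct and follows the same route as the paper. Both use two positions with $S_g=\{a_1,a_2\}$ and $S_h=\{b\}$, two games that agree on block coalitions (the quotient game $v^Q(T)=v(\bigcup_{g'\in T}S_{g'})$) but differ on mixed ones, and a direct flat-Shapley computation. The games differ: the paper compares a unanimity game with ``$b$ and at least one $a_i$'' (position sums $2/3$ vs.\ $1/3$), while you compare an additive game with a non-monotone one (sums $1$ vs.\ $4/3$). Your remark about adding a null second channel at $h$ is a small improvement, since it makes the example fit the equal-channel structure $\mathcal{P}=\mathcal{G}\times\mathcal{C}$, which the paper's unequal group sizes do not literally satisfy.
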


\begin{proof}
Consider two spatial positions with
\[
S_g=\{a_1,a_2\},
\qquad
S_h=\{b\}.
\]
Let \(v_1\) be the unanimity game with
\[
v_1(\{a_1,a_2,b\})=1
\]
and all other subsets zero. Then
\[
\phi^{v_1}(a_1)=\phi^{v_1}(a_2)=\phi^{v_1}(b)=\frac{1}{3},
\]
so
\[
\sum_{a\in S_g}\phi^{v_1}(a)=\frac{2}{3}.
\]

Define \(v_2\) by
\[
v_2(S)=1
\quad\text{iff}\quad
b\in S
\text{ and }
S\cap\{a_1,a_2\}\neq\emptyset,
\]
and \(v_2(S)=0\) otherwise. The quotient games over positions are identical:
\[
v_i^Q(\emptyset)=0,\quad
v_i^Q(\{g\})=0,\quad
v_i^Q(\{h\})=0,\quad
v_i^Q(\{g,h\})=1
\]
for \(i=1,2\). However,
\[
\phi^{v_2}(a_1)=\phi^{v_2}(a_2)=\frac{1}{6},
\]
so
\[
\sum_{a\in S_g}\phi^{v_2}(a)=\frac{1}{3}.
\]
Thus the position-level summed attribution changes even though the quotient
game over positions is the same.
\end{proof}

\subsection{Relation to Context-Averaged Hierarchical Accounting}
\label{app:owen-relation}

SSI is not a context-averaged global accounting value. This section records its
relation to the classical hierarchical value obtained by averaging over
external spatial coalitions, and explains why the corresponding Monte Carlo
estimator has a different sampling burden.

Define the quotient game over spatial positions:
\begin{equation}
v^Q(T)
=
v\!\left(\bigcup_{g'\in T}S_{g'}\right),
\qquad
T\subseteq\mathcal{G}.
\label{eq:app-quotient-game}
\end{equation}
For a target position \(g\) and an external spatial coalition
\(T\subseteq\mathcal{G}\setminus\{g\}\), define the context-conditioned internal
game
\begin{equation}
w_g^T(R)
=
v\!\left(
R\cup\bigcup_{g'\in T}S_{g'}
\right)
-
v\!\left(
\bigcup_{g'\in T}S_{g'}
\right),
\qquad
R\subseteq S_g.
\label{eq:app-internal-game}
\end{equation}
SSI corresponds to
\[
T=T_{\max}=\mathcal{G}\setminus\{g\},
\qquad
w_g^{T_{\max}}(R)=w_g(R).
\]

More generally, define a context-weighted hierarchical family
\begin{equation}
\alpha_\mu(g,c)
=
\sum_{T\subseteq\mathcal{G}\setminus\{g\}}
\mu(T)\,\mathrm{Sh}_c(w_g^T),
\qquad
\mu(T)\geq0,\quad \sum_T\mu(T)=1.
\label{eq:app-parametric-family}
\end{equation}
SSI is the point-mass choice
\[
\mu_{\mathrm{SSI}}(T)=\mathbf{1}\{T=T_{\max}\}.
\]
The Owen value uses Shapley weights over external spatial coalitions:
\[
\mu_{\mathrm{Owen}}(T)
=
\frac{|T|!(|\mathcal{G}|-|T|-1)!}{|\mathcal{G}|!}.
\]
Thus, SSI and Owen-style attribution use the same position-channel grouping but
estimate different quantities. SSI conditions on the observed context, whereas
Owen averages over external contexts.

\begin{theorem}[Owen value for context-averaged hierarchical accounting]
\label{thm:app-owen-characterization}
Suppose an attribution over position-channel players satisfies global
efficiency, position-level aggregability depending only on the quotient game,
quotient-level Shapley consistency, within-position symmetry, within-position
null-player, and additivity. Then the unique attribution is
\begin{equation}
\psi(g,c)
=
\sum_{T\subseteq\mathcal{G}\setminus\{g\}}
\frac{|T|!(|\mathcal{G}|-|T|-1)!}{|\mathcal{G}|!}
\,
\mathrm{Sh}_c(w_g^T).
\label{eq:app-owen}
\end{equation}
\end{theorem}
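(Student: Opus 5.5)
The plan is to reduce the statement to Owen's classical characterization of the value of games with a priori unions, with the partition $\{S_g\}_{g\in\mathcal{G}}$ of Eq.~\eqref{eq:decomp} as the coalition structure. I read the axioms as follows. \emph{Position-level aggregability depending only on the quotient game} together with \emph{quotient-level Shapley consistency} gives
\[
\sum_c\psi(g,c)=\mathrm{Sh}_g(v^Q)
\]
for $v^Q$ as in Eq.~\eqref{eq:app-quotient-game}; this is Owen's quotient-game property. \emph{Within-position symmetry} and \emph{null-player} are symmetry and null-player for players inside a common union. Additivity and global efficiency are the usual ones.

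\textbf{Step 1 (existence).} I will show that $\psi$ in Eq.~\eqref{eq:app-owen} satisfies each axiom. Additivity is immediate, since $w_g^T$ is linear in $v$ and $\mathrm{Sh}_c$ is linear. Within-position symmetry and null-player follow termwise in $T$: if two channels are symmetric (resp.\ one is null) in every $w_g^T$, then the local Shapley values agree (resp.\ vanish). One point needs care here, namely that the within-position axioms must be phrased with respect to all contexts $T$, or equivalently with respect to $v$ restricted to sets that are unions of positions plus a subset of $S_g$. For aggregability, local efficiency (as in Theorem~\ref{thm:positive}) gives
\[
\sum_c\mathrm{Sh}_c(w_g^T)=v^Q(T\cup\{g\})-v^Q(T).
\]
Averaging with the Shapley weights $\mu_{\mathrm{Owen}}$ then yields exactly $\mathrm{Sh}_g(v^Q)$. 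Global efficiency follows by summing over $g$ and using efficiency of $\mathrm{Sh}$ on $v^Q$.

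\textbf{Step 2 (uniqueness).} By additivity it suffices to determine the attribution on a basis of games, and I take the unanimity games $u_S$ with $\emptyset\neq S\subseteq\mathcal{P}$. I also need homogeneity $\psi(\lambda u_S)=\lambda\psi(u_S)$; additivity gives this over the rationals, and I will either add it explicitly or note that the standard argument via the null-player axiom handles scalar multiples as in Owen's proof.

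Fix $S$ and let $G_S=\{g: S\cap S_g\neq\emptyset\}$. The quotient game of $u_S$ is the unanimity game on $G_S$. Quotient consistency therefore forces each position in $G_S$ to receive total $1/|G_S|$ and every other position to receive $0$. Inside a position $g\in G_S$, channels outside $S$ are null in every context, so they receive $0$. The channels in $S\cap S_g$ are mutually symmetric, so they split $1/|G_S|$ equally. Positions outside $G_S$ consist of null channels and receive $0$. Hence the attribution of $u_S$ is pinned down: channel $(g,c)\in S$ gets
\[
\frac{1}{|G_S|\,|S\cap S_g|}.
\]
One can check this matches Eq.~\eqref{eq:app-owen}. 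Linearity then extends uniqueness to all $v$.

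\textbf{Main obstacle.} The hard part is making the axioms precise enough that Step 2 is legitimate. The phrases ``within-position symmetry'' and ``null-player'' must refer to $v$ itself (for example, $v(Q\cup\{p\})=v(Q\cup\{q\})$ for all $Q$ avoiding $p,q$), not only to the single conditional game $w_g$ of Eq.~\eqref{eq:conditional-game}. Otherwise uniqueness fails, since SSI itself would be a competing candidate were it not for efficiency. I would first restate the axioms in this global form, following Owen (1977), and then cite or reproduce his unanimity-basis argument.
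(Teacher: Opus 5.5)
Your proposal is correct and takes the same route as the paper: the paper's proof simply invokes Owen's (1977) characterization for the a priori unions $\{S_g\}_{g\in\mathcal{G}}$, and your existence check and unanimity-basis uniqueness argument reproduce that theorem. The homogeneity detour is unnecessary, because quotient consistency, symmetry and null-player already determine the value on $\lambda u_S$ for every $\lambda$. Reading within-position symmetry and null-player globally, as Owen's within-union axioms rather than as (L2)--(L3) for the single game $w_g$, is what the paper's citation implicitly requires. One correction to your closing remark: under the $w_g$-only reading the failure is one of existence rather than uniqueness, since the Owen value can give nonzero credit to a channel that is null in $w_g$.
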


\begin{proof}
The player set \(\mathcal{P}\) is partitioned into a priori unions
\(\{S_g\}_{g\in\mathcal{G}}\). The quotient game \(v^Q\) is the cooperative
game over spatial positions, and each \(w_g^T\) is the internal game over
channels at position \(g\) under external spatial context \(T\). Owen's
characterization for games with a priori unions states that the unique value
satisfying global efficiency, quotient-level Shapley consistency, and
within-union Shapley fairness is the Owen value. Applying this theorem to the
partition \(\{S_g\}_{g\in\mathcal{G}}\) gives Eq.~\eqref{eq:app-owen}.
\end{proof}

This theorem characterizes a context-averaged accounting objective. SSI does
not estimate \(\psi\) and does not use Eq.~\eqref{eq:app-owen} as its
computational target.

The sampling burden also differs. MC-Owen estimates Eq.~\eqref{eq:app-owen} by
sampling block-respecting permutations, which randomize both the channel order
within a position and the external spatial context preceding that position. Let
\(Z_{g,c}\) be one MC-Owen marginal sample for player \((g,c)\), and let \(T\)
be the sampled set of external spatial positions preceding \(g\). Then
\[
\mathrm{Var}(Z_{g,c})
=
\mathbb{E}_T[\mathrm{Var}(Z_{g,c}\mid T)]
+
\mathrm{Var}_T\!\left(\mathbb{E}[Z_{g,c}\mid T]\right).
\]
The first term is within-position channel-ordering variance. The second is
external-context variance caused by averaging over spatial coalitions. This
second term is absent from SSI because SSI fixes the observed external context.
When evidence is spatially localized, this external-context variance can make
MC-Owen estimates noisy unless many block-respecting permutations are sampled.

\subsection{Optional Region-Restricted SSI}
\label{app:roi}

The main experiments compute SSI for every spatial position. For applications
requiring local explanations or reduced computation, the same construction can
be restricted to a user-specified set of spatial positions
\(\mathcal{R}\subseteq\mathcal{G}\):
\[
\Phi(g,c)=\mathrm{Sh}_c(w_g),
\qquad
g\in\mathcal{R}.
\]
The conditional game \(w_g\) remains unchanged because all positions outside
\(g\) stay fixed as the observed context. Restricting \(\mathcal{R}\) changes only the
evaluated support, not the definition of the position-wise attribution.

\section{Additional Experimental Details}
\label{app:exp_details}

\subsection{Datasets, Models, and Preprocessing}
\label{app:datasets-models}

\paragraph{CUB-200-2011.}
CUB-200-2011~\citep{wah2011caltech} contains 11,788 RGB images of 200 bird species. Each image is annotated with a bounding box, part keypoints, and binary attributes, including color attributes. We fine-tune an ImageNet-pretrained ResNet-18 classifier and use the ground-truth class logit as the scalar response for attribution. 

\paragraph{EuroSAT.}
EuroSAT~\citep{helber2019eurosat} contains 27,000 Sentinel-2 patches from 10 land-use/land-cover classes. We use the 13-band version, including visible RGB, red-edge, near-infrared, short-wave infrared, and atmospheric bands. We train a ResNet-18 classifier from scratch on all 13 bands. 

\paragraph{HLS Burn Scar.}
HLS Burn Scar~\citep{phillips2023hlsburnscar} contains Harmonized Landsat--Sentinel-2 patches with six spectral bands: Blue, Green, Red, NIR, SWIR\,1, and SWIR\,2. We train a lightweight UNet for burn-scar segmentation. For attribution and evaluation, we average the burn-class logit over the evaluation region to obtain a scalar response. 

\paragraph{Training and robustness.}
Full optimizer settings, learning rates, epochs, and preprocessing choices are dataset-specific and fixed before attribution evaluation.

\subsection{Attribution Baseline Details}
\label{app:baseline-details}

\paragraph{Integrated Gradients.}
We compute Integrated Gradients (IG)~\citep{sundararajan2017axiomatic} along a straight-line path from the masking baseline to the input using 300 integration steps. IG produces dense attributions in \(\mathbb{R}^{C\times H\times W}\). To match the common player space, we sum pixel-level attributions within each grid cell and channel, yielding one score for every position-channel player $(g,c)$.

\paragraph{DeepSHAP.}
DeepSHAP~\citep{lundberg2017unified,shrikumar2017learning} combines DeepLIFT-style propagation rules with Shapley-value approximation. We compute DeepSHAP with 1{,}000 background samples. As with IG, dense attributions are aggregated within each grid cell and channel to obtain scores over the same $(g,c)$ player space.

\paragraph{KernelSHAP.}
For KernelSHAP~\citep{lundberg2017unified}, each position-channel pair $(g,c)$ is treated as an interpretable feature. We use the same spatial grid as SSI and sample $2|\mathcal{P}|+2048$ coalitions following the library default, where the first $2|\mathcal{P}|$ are deterministic singleton and complement pairs that receive the highest Shapley kernel weights, and the remaining $2048$ are drawn uniformly at random. KernelSHAP is therefore the flat-Shapley baseline most directly matched to SSI's player granularity, but it does not impose the position-level coalition structure used by SSI.

\paragraph{RISE rank-1 control.}
We include a RISE-based rank-1 control~\citep{petsiuk2018rise}. RISE estimates a spatial importance map $s_{\mathrm{rise}}(g)$ by generating $M{=}8000$ random binary masks, evaluating the model response under each masked input, and computing a weighted average of the masks. We combine this spatial score with a global channel score $q(c)$, defined as the target-logit drop caused by masking channel $c$ globally across all spatial positions:
\[
A_{\mathrm{rise}}(g,c)=s_{\mathrm{rise}}(g)\,q(c).
\]
This gives a strong spatial perturbation baseline, but its rank-1 structure forces channel relevance to be global rather than varying by position. Comparing SSI against this control isolates the benefit of preserving the full joint attribution matrix rather than factorizing it into spatial and spectral summaries.

\paragraph{Additional gradient-based baselines on HLS Burn Scar.}
We additionally compare against Expected Gradients~\citep{erion2021expected},
which averages gradient-based contributions across reference samples and
interpolation points; Guided IG~\citep{kapishnikov2021guided}, which
adapts the integration path; and FullGrad~\citep{srinivas2019fullgradient},
which incorporates input and internal bias-gradient information.
All reported scores are evaluated over the same HLS Burn Scar
position-channel players and using the metrics of
Appendix~\ref{app:metric-definitions}.
For dense input-channel attributions, pixel-level scores are summed
within each grid cell while retaining their channel indices.
Unlike these input-channel attribution methods, FullGrad also includes
internal bias-gradient terms; its conversion to position-channel
scores is implementation-specific rather than a direct spatial sum of
a native input-channel tensor.

\paragraph{SSI.}
SSI computes the full-context attribution $\Phi(g,c)$ from Eq.~\eqref{eq:full-context} over the full position-channel player set. No search-selected support is used for the reported main metrics. The resulting map is evaluated directly as a joint attribution matrix and can also be summarized spatially by \(\sum_c \Phi(g,c)\) or spectrally by \(\sum_g \Phi(g,c)\).

\subsection{Computational Cost}
\label{app:computational_cost}

We distinguish the \emph{permutation count} $K$, the number of channel
orderings sampled in each cell's conditional game $w_g$ as used in
Proposition~\ref{prop:ssi_sampling_concentration}, from the \emph{evaluation
count} $N_{\mathrm{eval}}$, the total number of model forward passes needed
to produce all attribution scores for one image. Reading
Eq.~\eqref{eq:ssi-mc} as a literal recipe, each sampled permutation $\pi$
requires evaluating $w_g$ on the $C{+}1$ cumulative subsets
$\emptyset,\{c_1\},\{c_1,c_2\},\ldots,S_g$. Since $w_g(\emptyset){=}0$
requires no model evaluation and the endpoint $v(B_g)$ is shared across
all $K$ permutations at the same cell, the per-cell forward-pass count
is bounded by $K(C{+}1)$ in the worst case and is closer to
$K(C{-}1){+}2$ when endpoint states are cached. We report the worst-case
upper bound throughout to keep the comparison conservative; cached
implementations only reduce SSI's cost further. For the small-$C$
regime where SSI enumerates all subsets, the analogous bound is $2^C$
per cell.

Table~\ref{tab:forward_passes} reports $N_{\mathrm{eval}}$ for every
method under this convention. SSI enumerates all channel subsets when
$2^C\leq 10$ (CUB-200 with $C{=}3$, giving $G\cdot 2^C$ evaluations) and
switches to permutation sampling with $K{=}10$ otherwise (EuroSAT with
$C{=}13$ and HLS with $C{=}6$, giving up to $G\cdot K(C{+}1)$
evaluations). MC-Owen with $M{=}200$ block-respecting permutations
costs up to $M|\mathcal{P}|$ evaluations along the same conservative
counting convention, since each permutation traverses all
$|\mathcal{P}|$ players. Under this convention SSI uses approximately
$6\%$ of the MC-Owen budget on HLS Burn Scar
($4{,}480$ vs.\ $76{,}800$ evaluations); the gap remains one to two
orders of magnitude across all three datasets. These are analytical model-evaluation counts, not measured wall-clock times. In particular, forward-pass counts are not directly comparable with backward-pass costs of gradient-based methods, and the $6\%$ claim applies only to the comparison between SSI and MC-Owen under the stated counting convention.

\begin{table}[h]
\centering
\caption{\textbf{Forward-pass evaluations per image.}
$|\mathcal{P}|{=}G{\times}C$ is the total number of position-channel
players. SSI evaluations are reported under a worst-case convention
that does not assume cached endpoint states; cached implementations
further reduce SSI's cost. MC-Owen is shown at $M{=}200$ block-respecting
permutations.}
\label{tab:forward_passes}
\small
\setlength{\tabcolsep}{5pt}
\begin{tabular}{@{}l l r r r@{}}
\toprule
Method & Formula & CUB-200 & EuroSAT & HLS \\
       &         & \footnotesize$(16{\times}16,\,C{=}3)$
       & \footnotesize$(8{\times}8,\,C{=}13)$
       & \footnotesize$(8{\times}8,\,C{=}6)$ \\
\midrule
IG         & $300$                          &     300 &     300 &     300 \\
DeepSHAP   & $1{,}000$                      &   1,000 &   1,000 &   1,000 \\
KernelSHAP & $2|\mathcal{P}|{+}2048$        &   3,584 &   3,712 &   2,816 \\
RISE       & $8000{+}C$                     &   8,003 &   8,013 &   8,006 \\
SSI        & $G\cdot\min(2^C,\,K(C{+}1))$   &   2,048 &   8,960 &   4,480 \\
MC-Owen    & $M|\mathcal{P}|$               & 153,600 & 166,400 &  76,800 \\
\bottomrule
\end{tabular}
\end{table}

\subsection{Scalability to Higher Channel Counts}
\label{app:scalability}

SSI's computational cost per image scales as $G\cdot K(C{+}1)$ model
forward passes (Appendix~\ref{app:computational_cost}), where $K$ is
the per-cell permutation count. For $C \geq 4$, $K$ is capped at $10$,
so the per-image cost grows only linearly in $C$ rather than with
$|\mathcal{P}|=GC$ as in flat or Owen-style estimators that traverse
all players per permutation. The main concern is not how the computational cost grows with $C$, but
whether $K{=}10$ permutations provide sufficient estimation precision as
$C$ increases.

Proposition~\ref{prop:ssi_sampling_concentration} bounds the uniform
estimation error by
\[
R\sqrt{\frac{\ln(2|\mathcal{G}|C/\delta)}{2K}}.
\]
The dependence on $C$ enters only through $\sqrt{\ln C}$. Increasing
$C$ from $13$ (EuroSAT) to $200$ (a typical hyperspectral sensor)
multiplies the bound by $\sqrt{\ln 200 / \ln 13} \approx 1.3$, a
modest increase. Moreover, the marginal-contribution range $R$ is
expected to shrink as $C$ grows, since each channel then accounts for
a smaller fraction of the input signal, partially or fully offsetting
the $\sqrt{\ln C}$ growth.

If tighter control is needed for very large $C$, $K$ can be increased
at $O(\ln C)$ rate to maintain the same uniform bound. For example,
moving from $C{=}13$ to $C{=}200$ while holding the bound constant
requires only $K \approx 10 \times \ln 200 / \ln 13 \approx 21$
permutations per position, doubling the budget but remaining far below
the cost of flat or Owen-style estimators that scale with $GC$.

\section{Additional Evaluation Protocols}
\label{app:exp_metrics}

\subsection{Metric Definitions}
\label{app:metric-definitions}

\paragraph{Joint deletion}
Joint deletion starts from the original input and masks position-channel
players $(g,c)$ in decreasing order of signed attribution score. The target-class logit is recorded after each step. Lower deletion AUC and higher insertion AUC indicate better results.

\paragraph{Rank-1 projection.}
For each method, we reshape the grid-level attribution scores into a matrix $A\in\mathbb{R}^{C\times N}$, where $C$ is the number of channels and $N$ is the number of grid cells. We compute the best rank-1 approximation
\[
\widehat{A}=\sigma_1u_1v_1^\top,
\]
where $\sigma_1$ is the largest singular value and $u_1,v_1$ are the corresponding singular vectors. Deletion and insertion are then evaluated using $\widehat{A}$.

\paragraph{Spatial aggregation.}
Spatial maps are obtained by summing over channels:
\[
A_{\mathrm{sp}}(g)=\sum_c A(g,c).
\]
We threshold $A_{\mathrm{sp}}$ at multiple top-coverage levels and compute mIoU with available spatial annotations. CUB uses bird bounding boxes, and HLS uses burn-scar masks or their spatial support.

\paragraph{Spectral aggregation.}
Channel scores are obtained by summing over spatial cells:
\[
A_{\mathrm{ch}}(c)=\sum_g A(g,c).
\]
Channel deletion masks entire channels in decreasing $A_{\mathrm{ch}}(c)$ order. Spectral hit rates compare the top-ranked channels against dataset-specific spectral priors:
\[
\mathrm{Hit@}K
=
\frac{|\mathrm{TopK}(A)\cap \mathcal{T}_y|}{|\mathcal{T}_y|},
\]
averaged over correctly classified samples.

\subsection{ROAD Faithfulness Evaluation}
\label{app:road}

We also evaluate SSI with Remove and Debias (ROAD)~\citep{rong2022road}
on HLS Burn Scar as a complementary faithfulness check to the
standard masking-based deletion and insertion tests.
The evaluation compares methods on the same position-channel player
space, reporting ROAD area under the curve for removing the most
relevant features first (MoRF; lower is better) and the least relevant
features first (LeRF; higher is better).
ROAD is intended to mitigate some evaluation artifacts introduced by
feature removal, rather than eliminate every distribution-shift effect.
Table~\ref{tab:road_hls} reports the results included in the author
rebuttal. SSI performs best in both removal directions among the
nine methods considered.

\begin{table}[h!]
\centering
\caption{\textbf{ROAD evaluation on HLS Burn Scar.}
LeRF AUC (higher is better) and MoRF AUC (lower is better).
Values are mean $\pm$ reported variability, as in the author rebuttal.
Best scores per column are in \textbf{bold}.}
\label{tab:road_hls}
\small
\setlength{\tabcolsep}{10pt}
\begin{tabular}{@{}l cc@{}}
\toprule
Method & ROAD LeRF $\uparrow$ & ROAD MoRF $\downarrow$ \\
\midrule
SSI & \textbf{.657 $\pm$ .010} & \textbf{.1143 $\pm$ .0058} \\
MC-Owen & .629 $\pm$ .009 & .1275 $\pm$ .0058 \\
RISE & .624 $\pm$ .010 & .1293 $\pm$ .0067 \\
IG & .623 $\pm$ .010 & .1316 $\pm$ .0058 \\
DeepSHAP & .622 $\pm$ .009 & .1325 $\pm$ .0057 \\
Expected Gradients & .620 $\pm$ .010 & .1354 $\pm$ .0060 \\
Guided IG & .612 $\pm$ .012 & .1430 $\pm$ .0074 \\
KernelSHAP & .588 $\pm$ .010 & .1702 $\pm$ .0076 \\
FullGrad & .527 $\pm$ .015 & .2337 $\pm$ .0138 \\
\bottomrule
\end{tabular}
\end{table}

\subsection{Spectral Band Priors}
\label{app:spectral_priors}

The spectral hit-rate evaluation (\S\ref{sec:exp_aggregation}) requires a set of \emph{prior bands} for each class. These are channels whose reflectance is expected to be positively associated with the target-class decision. We select these priors from established spectral indices and physical reasoning rather than the attribution results themselves.

\paragraph{HLS Burn Scar.}
The HLS Burn Scar dataset uses a six-band stack (Blue, Green, Red, Narrow NIR, SWIR\textsubscript{1}, SWIR\textsubscript{2}) derived from the Harmonized Landsat Sentinel-2 product~\citep{claverie2018harmonized}. For the burn-scar class, we designate Red (band~2), SWIR\textsubscript{1} (band~4), and SWIR\textsubscript{2} (band~5) as prior bands. This selection follows directly from the Normalized Burn Ratio (NBR) and its variants, which are the standard spectral indices for burn severity mapping~\citep{key2006landscape,garcia1991mapping}. Burned surfaces exhibit elevated SWIR reflectance due to exposed soil, charcoal, and ash, and elevated red reflectance due to the loss of chlorophyll absorption~\citep{trigg2001mirbi}. These spectral signatures have been validated across decades of fire-mapping literature using Landsat TM/ETM+/OLI and Sentinel-2 MSI sensors. NIR is excluded from the positive prior because burn scars cause a sharp drop in NIR reflectance (loss of mesophyll scattering), making NIR \emph{negatively} correlated with the burn-scar class.

\paragraph{EuroSAT.}
EuroSAT provides 13-band Sentinel-2 imagery spanning the visible, red-edge, NIR, and SWIR spectral regions. We assign class-specific priors based on the physical reflectance properties of each land-cover type:
\begin{itemize}[nosep,leftmargin=*]
\item \textbf{Vegetation classes} (AnnualCrop, Forest, HerbaceousVegetation, Pasture, PermanentCrop). We use the red-edge (B05--B07, 705--783\,nm) and NIR (B08/B8A, 842/865\,nm) bands as priors. Healthy vegetation produces strong NIR reflectance through mesophyll scattering and exhibits a steep red-edge transition driven by chlorophyll absorption~\citep{clevers2002derivation,delegido2011evaluation}. These bands underpin NDVI, NDRE, and related vegetation indices that have been used operationally for crop monitoring and forest mapping for decades~\citep{tucker1979red,rouse1974monitoring}.
\item \textbf{Industrial.} The priors are the SWIR bands (B11, B12; 1610/2190\,nm). Impervious surfaces such as concrete, asphalt, and metal roofing reflect strongly in the SWIR due to low moisture content and high mineral reflectance~\citep{zha2003use,xu2008new}.
\item \textbf{Highway, Residential, River, SeaLake.} No positive prior is assigned. These classes are spectrally heterogeneous (Highway, Residential) or uniformly low-reflectance (water bodies), so no single band subset has a reliably positive association with the class decision.
\end{itemize}

In both datasets, we conservatively include only prior bands with a well-documented, physically grounded positive correlation with the target class. Classes without a clear positive spectral signature receive an empty prior set and are excluded from the hit-rate evaluation. This design ensures that the spectral validity metric tests whether the attribution method recovers known physical relationships, rather than fitting to arbitrary band rankings.

\subsection{Grid Resolution}
\label{app:grid_resolution}

Tables~\ref{tab:gr_hls_4x4}--\ref{tab:gr_euro_4x4} report faithfulness
metrics across alternative grid resolutions under the per-channel mean
baseline. For each dataset we vary the spatial grid from $4{\times}4$
to $8{\times}8$ (where applicable) and report joint deletion AUC,
rank-1 deletion AUC, and GridSum insertion AUC. SSI maintains leading
performance across resolutions, suggesting robustness to the choice of
grid size.

\begin{table}[!ht]
\centering
\caption{\textbf{Faithfulness test on HLS Burn Scar with $4{\times}4$ grid.}
Values are mean $\pm$ bootstrap std. Best in \textbf{bold}.}
\label{tab:gr_hls_4x4}
\small
\setlength{\tabcolsep}{5pt}
\begin{tabular}{@{}l ccc@{}}
\toprule
Method & Joint.Del$\downarrow$ & Rk1.Del$\downarrow$ & GridSum.Ins$\uparrow$ \\
\midrule
SSI        & \textbf{.1544$\pm$.0082} & \textbf{.1588$\pm$.0097} & \textbf{.6310$\pm$.0077} \\
IG         & .2499$\pm$.0159 & .2792$\pm$.0171 & .5739$\pm$.0089 \\
DeepSHAP   & .2481$\pm$.0145 & .2793$\pm$.0176 & .5835$\pm$.0089 \\
KernelSHAP & .2129$\pm$.0149 & .2534$\pm$.0179 & .5651$\pm$.0080 \\
RISE      & .4320$\pm$.0188 & .4330$\pm$.0190 & .5327$\pm$.0070 \\
\bottomrule
\end{tabular}
\end{table}

\begin{table}[!ht]
\centering
\caption{\textbf{Faithfulness test on CUB-200 with $4{\times}4$ grid.}
Values are mean $\pm$ bootstrap std. Best in \textbf{bold}.}
\label{tab:gr_cub_4x4}
\small
\setlength{\tabcolsep}{5pt}
\begin{tabular}{@{}l ccc@{}}
\toprule
Method & Joint.Del$\downarrow$ & Rk1.Del$\downarrow$ & GridSum.Ins$\uparrow$ \\
\midrule
SSI        & \textbf{.0239$\pm$.0011} & \textbf{.0264$\pm$.0013} & \textbf{.7267$\pm$.0111} \\
IG         & .0257$\pm$.0012 & .0284$\pm$.0015 & .6515$\pm$.0149 \\
DeepSHAP   & .0259$\pm$.0012 & .0279$\pm$.0013 & .6511$\pm$.0146 \\
KernelSHAP & .0263$\pm$.0012 & .0285$\pm$.0013 & .6900$\pm$.0124 \\
RISE      & .0329$\pm$.0017 & .0329$\pm$.0015 & .7109$\pm$.0113 \\
\bottomrule
\end{tabular}
\end{table}

\begin{table}[!ht]
\centering
\caption{\textbf{Faithfulness test on CUB-200 with $8{\times}8$ grid.}
Values are mean $\pm$ bootstrap std. Best in \textbf{bold}.}
\label{tab:gr_cub_8x8}
\small
\setlength{\tabcolsep}{5pt}
\begin{tabular}{@{}l ccc@{}}
\toprule
Method & Joint.Del$\downarrow$ & Rk1.Del$\downarrow$ & GridSum.Ins$\uparrow$ \\
\midrule
SSI        & \textbf{.0123$\pm$.0007} & \textbf{.0154$\pm$.0010} & \textbf{.7567$\pm$.0116} \\
IG         & .0152$\pm$.0008 & .0169$\pm$.0010 & .5215$\pm$.0213 \\
DeepSHAP   & .0151$\pm$.0008 & .0170$\pm$.0010 & .5204$\pm$.0221 \\
KernelSHAP & .0169$\pm$.0008 & .0240$\pm$.0016 & .5688$\pm$.0188 \\
RISE      & .0225$\pm$.0013 & .0225$\pm$.0012 & .7425$\pm$.0118 \\
\bottomrule
\end{tabular}
\end{table}

\begin{table}[!ht]
\centering
\caption{\textbf{Faithfulness test on EuroSAT with $4{\times}4$ grid.}
Values are mean $\pm$ bootstrap std. Best in \textbf{bold}.}
\label{tab:gr_euro_4x4}
\small
\setlength{\tabcolsep}{5pt}
\begin{tabular}{@{}l ccc@{}}
\toprule
Method & Joint.Del$\downarrow$ & Rk1.Del$\downarrow$ & GridSum.Ins$\uparrow$ \\
\midrule
SSI        & \textbf{.1295$\pm$.0158} & .1425$\pm$.0155 & \textbf{.7096$\pm$.0195} \\
IG         & .1957$\pm$.0233 & .1745$\pm$.0217 & .6731$\pm$.0224 \\
DeepSHAP   & .1963$\pm$.0234 & .1750$\pm$.0217 & .6730$\pm$.0223 \\
KernelSHAP & .1416$\pm$.0199 & \textbf{.1458$\pm$.0181} & .6432$\pm$.0240 \\
RISE      & .2116$\pm$.0272 & .2120$\pm$.0263 & .7007$\pm$.0188 \\
\bottomrule
\end{tabular}
\end{table}

\subsection{Full Spatial Localization Curves}
\label{app:full_spatial}

Tables~\ref{tab:cub_full_spatial}--\ref{tab:hls_full_spatial} report spatial mIoU at each coverage threshold. On CUB-200, SSI leads across all thresholds, while IG and DeepSHAP have average mIoU scores 3 to 4 points lower and KernelSHAP falls substantially behind. On HLS Burn Scar, SSI leads at all coverage levels by a clear margin, consistent with the localized nature of burn-scar evidence.

\begin{table}[!ht]
\centering
\caption{\textbf{CUB-200 spatial localization.} mIoU at top-coverage levels. Values are mean $\pm$ bootstrap std. Higher is better. Best scores are in \textbf{bold}.}
\label{tab:cub_full_spatial}
\small
\setlength{\tabcolsep}{4pt}
\begin{tabular}{@{}l c c c c c c@{}}
\toprule
Method & @0.1 & @0.2 & @0.3 & @0.4 & @0.5 & Avg. \\
\midrule
SSI        & \textbf{.2208$\pm$.0102} & \textbf{.3103$\pm$.0080} & \textbf{.3592$\pm$.0082} & \textbf{.3893$\pm$.0102} & \textbf{.4036$\pm$.0118} & \textbf{.3366$\pm$.0044} \\
IG         & .2148$\pm$.0083 & .2926$\pm$.0053 & .3228$\pm$.0054 & .3342$\pm$.0079 & .3361$\pm$.0099 & .3001$\pm$.0034 \\
DeepSHAP   & .2150$\pm$.0079 & .2929$\pm$.0050 & .3220$\pm$.0052 & .3333$\pm$.0074 & .3362$\pm$.0101 & .2999$\pm$.0033 \\
KernelSHAP & .1061$\pm$.0024 & .1787$\pm$.0030 & .2267$\pm$.0044 & .2674$\pm$.0071 & .2970$\pm$.0095 & .2152$\pm$.0026 \\
RISE       & .2099$\pm$.0090 & .2888$\pm$.0074 & .3293$\pm$.0072 & .3529$\pm$.0086 & .3671$\pm$.0103 & .3096$\pm$.0038 \\
\bottomrule
\end{tabular}
\end{table}

\begin{table}[!ht]
\centering
\caption{\textbf{HLS Burn Scar spatial localization.} mIoU at top-coverage levels. Values are mean $\pm$ bootstrap std. Higher is better. Best scores are in \textbf{bold}.}
\label{tab:hls_full_spatial}
\small
\setlength{\tabcolsep}{4pt}
\begin{tabular}{@{}l c c c c c c@{}}
\toprule
Method & @0.05 & @0.10 & @0.15 & @0.20 & @0.25 & Avg. \\
\midrule
SSI        & \textbf{.3109$\pm$.0113} & \textbf{.3886$\pm$.0131} & \textbf{.3874$\pm$.0137} & \textbf{.3587$\pm$.0159} & \textbf{.3321$\pm$.0155} & \textbf{.3555$\pm$.0065} \\
IG         & .2722$\pm$.0175 & .2699$\pm$.0162 & .2552$\pm$.0162 & .2396$\pm$.0168 & .2245$\pm$.0174 & .2523$\pm$.0075 \\
DeepSHAP   & .2850$\pm$.0177 & .2776$\pm$.0157 & .2583$\pm$.0157 & .2441$\pm$.0178 & .2270$\pm$.0174 & .2584$\pm$.0076 \\
KernelSHAP & .0809$\pm$.0102 & .0932$\pm$.0101 & .1075$\pm$.0115 & .1183$\pm$.0115 & .1228$\pm$.0127 & .1045$\pm$.0050 \\
RISE       & .1589$\pm$.0108 & .1913$\pm$.0116 & .2145$\pm$.0119 & .2346$\pm$.0113 & .2585$\pm$.0131 & .2116$\pm$.0048 \\
\bottomrule
\end{tabular}
\end{table}

EuroSAT lacks a spatially discriminative ground-truth proxy. All methods produce identical mIoU with zero bootstrap variance at every threshold, so we do not include a spatial localization table for this dataset.

\subsection{Full Spectral Hit Curves}
\label{app:full_spectral}

Tables~\ref{tab:hls_full_spectral} and \ref{tab:eurosat_full_spectral} report channel deletion AUC and spectral hit rates at all $K$ values. On HLS Burn Scar, SSI leads in channel deletion and achieves the highest hit rates, reaching .8867 at $K{=}5$ and exceeding the next-best method. On EuroSAT, SSI achieves the best channel deletion and leads on every hit-rate threshold, with KernelSHAP competitive at higher $K$ values.

\begin{table}[!ht]
\centering
\caption{\textbf{HLS Burn Scar spectral aggregation.} Lower channel deletion scores and higher hit rates are better. Values are mean $\pm$ bootstrap std. Best scores are in \textbf{bold}.}
\label{tab:hls_full_spectral}
\small
\setlength{\tabcolsep}{4pt}
\begin{tabular}{@{}l c c c c c c@{}}
\toprule
Method & Ch.Del. $\downarrow$ & Hit@1 & Hit@2 & Hit@3 & Hit@4 & Hit@5 \\
\midrule
SSI        & \textbf{.2180$\pm$.0094} & \textbf{.3167$\pm$.0068} & \textbf{.4333$\pm$.0158} & \textbf{.6033$\pm$.0159} & \textbf{.7567$\pm$.0193} & \textbf{.8867$\pm$.0163} \\
IG         & .2724$\pm$.0178 & .3000$\pm$.0092 & .3667$\pm$.0148 & .4200$\pm$.0154 & .5767$\pm$.0148 & .7600$\pm$.0148 \\
DeepSHAP   & .2723$\pm$.0174 & .3000$\pm$.0103 & .3667$\pm$.0145 & .4233$\pm$.0160 & .5767$\pm$.0147 & .7600$\pm$.0151 \\
KernelSHAP & .2568$\pm$.0161 & .2833$\pm$.0120 & .3567$\pm$.0126 & .4633$\pm$.0163 & .6067$\pm$.0167 & .7533$\pm$.0145 \\
RISE       & .3166$\pm$.0204 & .2333$\pm$.0153 & .3067$\pm$.0200 & .5000$\pm$.0192 & .5533$\pm$.0159 & .6700$\pm$.0034 \\
\bottomrule
\end{tabular}
\end{table}

\begin{table}[!ht]
\centering
\caption{\textbf{EuroSAT spectral aggregation.} Lower channel deletion scores and higher hit rates are better. Values are mean $\pm$ bootstrap std. Best scores are in \textbf{bold}.}
\label{tab:eurosat_full_spectral}
\small
\setlength{\tabcolsep}{4pt}
\begin{tabular}{@{}l c c c c c c@{}}
\toprule
Method & Ch.Del. $\downarrow$ & Hit@1 & Hit@2 & Hit@3 & Hit@4 & Hit@5 \\
\midrule
SSI        & \textbf{.1039$\pm$.0165} & \textbf{.1768$\pm$.0231} & \textbf{.3472$\pm$.0236} & \textbf{.4545$\pm$.0323} & \textbf{.5328$\pm$.0289} & \textbf{.5960$\pm$.0264} \\
IG         & .1426$\pm$.0231 & .0258$\pm$.0115 & .2220$\pm$.0283 & .3061$\pm$.0313 & .3881$\pm$.0322 & .4624$\pm$.0335 \\
DeepSHAP   & .1437$\pm$.0241 & .0258$\pm$.0113 & .2220$\pm$.0291 & .3061$\pm$.0306 & .3843$\pm$.0341 & .4573$\pm$.0358 \\
KernelSHAP & .1347$\pm$.0209 & .0879$\pm$.0205 & .2207$\pm$.0298 & .3295$\pm$.0330 & .4232$\pm$.0335 & .5394$\pm$.0366 \\
RISE       & .1387$\pm$.0207 & .0788$\pm$.0181 & .1876$\pm$.0241 & .2629$\pm$.0288 & .3311$\pm$.0270 & .4258$\pm$.0242 \\
\bottomrule
\end{tabular}
\end{table}

\end{document}